\documentclass[pmlr]{jmlr}

\usepackage{mathpartir}

\usepackage{hyperref}
\usepackage{enumerate}
\usepackage{microtype}
\usepackage{thm-restate}
\usepackage{paralist}

\usepackage{tikz}

\newcommand{\naturals}{\mathbb{N}}
\newcommand{\alphabet}{\Sigma}
\newcommand{\alphabetf}{\widehat{\Sigma}}
\newcommand{\alphabett}{\Gamma}
\newcommand{\irr}{\mathcal{I}}

\newcommand{\lang}{L}
\newcommand{\production}{\rightarrow}
\newcommand{\deriv}{\Rightarrow}
\newcommand{\progressp}{\models}

\newcommand{\lefts}{\ell}
\newcommand{\rights}{r}

\newcommand{\angl}[1]{\left\langle#1\right\rangle}
\newcommand{\cc}[2]{{[#1]}_{#2}}

\newcommand{\pdat}[3]{\mathrel{\raisebox{-3pt}{$\xrightarrow{#1,\,#2/#3}$}}}

\newenvironment{proofsketch}{\par\noindent{\bfseries\upshape Proof\ sketch\ }}{\jmlrQED}

% Nicer-looking parentheses for the Dyck language
\newcommand{\lp}{\texttt{[}} % chktex 9
\newcommand{\rp}{\texttt{]}} % chktex 9

\title{Decision problems for Clark-congruential languages}
\author{%
    \Name{Makoto Kanazawa\nametag{\thanks{This work performed at the National Institute of Informatics in Tokyo, Japan.}\thanks{Supported by JSPS KAKENHI Grant Number 17K00026.}}} \addr{Hosei University, Tokyo, Japan} \Email{kanazawa@hosei.ac.jp}%
    \AND%
    \Name{Tobias Kapp\'{e}\nametag{\footnotemark[1]\thanks{Partially supported by the ERC Starting Grant ProFoundNet (grant code 679127).}}} \addr{University College London, London, United Kingdom} \Email{tkappe@cs.ucl.ac.uk}
}

\jmlrpages{}
\jmlrproceedings{}{}

\begin{document}

\maketitle

\begin{abstract}
A common question when studying a class of context-free grammars (CFGs) is whether equivalence is decidable within this class.
We answer this question positively for the class of \emph{Clark-congruential grammars}, which are of interest to grammatical inference.
We also consider the problem of checking whether a given CFG is Clark-congruential, and show that it is decidable given that the CFG is a deterministic CFG\@.
\end{abstract}

\section{Introduction}

Given two context-free grammars (CFGs), the \emph{equivalence problem} asks whether they represent the same language; this is well known to be undecidable in general~\cite{bar-hillel-perles-shamir-1961}.
In contrast, the equivalence problem is decidable within some families of CFGs, such as deterministic CFGs~\cite{senizergues-2001} and (pre-){}NTS grammars~\cite{senizergues-1985,autebert-boasson-1992}.
Thus, a reasonable question to ask when studying a subclass of CFGs is whether equivalence is decidable for members of this class.

One subclass of CFGs of interest to grammatical inference consists of the CFGs considered in~\cite{clark-2010b}, which we refer to as \emph{Clark-congruential (CC) grammars}.
There it is shown that, given an oracle called the ``teacher'', an algorithm can infer a language known to the teacher by posing questions about the language in a fixed format.
In particular, one type of question that the teacher can answer is an \emph{equivalence query}, where the algorithm supplies a CFG and asks whether it represents the language that the teacher has in mind.
A similar (if slightly less general) teacher can be used to infer regular languages~\cite{angluin-1987}.

In analogy to other classes of CFGs, one might ask whether the equivalence problem for CC grammars is decidable; in analogy to regular languages, one might ask whether it is in principle possible to implement a teacher that answers equivalence queries for a CC grammar.
Motivated by these questions, we investigate decision problems surrounding CC grammars.
Our main contribution is a proof that equivalence and congruence problems for these grammars are decidable, based on arguments of that ilk for pre-NTS grammars~\cite{autebert-boasson-1992}.
We also show that it is decidable whether a deterministic CFG is CC\@.

\medskip

The remainder of this paper is organised as follows.
In Section~\ref{section:preliminaries}, we recall some preliminary notions.
In Section~\ref{section:cc-languages}, we discuss the congruence, equivalence and recognition problems for CC grammars.
We list directions for further work in Section~\ref{section:further-work}.
To preserve the narrative, some proofs appear in the appendices.

\section{Preliminaries}%
\label{section:preliminaries}

A relation $R \subseteq S \times S$ is said to be \emph{Noetherian} if it does not admit an infinite chain, i.e., there exist no infinite sequence ${(s_n)}_{n \in \naturals}$ such that for all $n \in \naturals$ it holds that $s_n \neq s_{n+1}$ and $s_n \mathrel{R} s_{n+1}$.
$R$ is \emph{confluent on $S' \subseteq S$} if it is transitive and when for all $s, s', s'' \in S'$ such that $s \mathrel{R} s'$ and $s \mathrel{R} s''$, there exists a $t \in S'$ with $s' \mathrel{R} t$ and $s'' \mathrel{R} t$.

\paragraph{Words and languages}
We fix a finite set $\alphabet$, called the \emph{alphabet}, and write $\alphabet^*$ for the \emph{language} of words over $\alphabet$.
We write $\alphabett$ for another finite alphabet that contains $\alphabet$, and the symbol $\$$, which is not in $\alphabet$. % chktex 40
The \emph{empty word} is denoted by $\epsilon$.
We write $|w|$ for the \emph{length} of $w \in \alphabet^*$.
We also fix an (arbitrary) total order $\preceq$ on $\alphabet$, and extend $\preceq$ to an order on $\alphabet^*$ by defining $x \preceq y$ if and only if either $|x| < |y|$, or $|x| = |y|$ and $x$ precedes $y$ lexicographically.
A \emph{prefix} (resp.\ \emph{suffix}) of $w \in \alphabet^*$ is a $w' \in \alphabet^*$ such that there exists a $y \in \alphabet^*$ with $w'y = w$ (resp.\ $yw' = w$); $w$ \emph{overlaps} with $x$ if a non-empty suffix of $w$ is a prefix of $x$, or vice versa.

A function $h: \alphabet^* \to \alphabet^*$ is a \emph{morphism} when for $w, x \in \alphabet^*$ it holds that $h(wx) = h(w)h(x)$.
If we define a function $h: \alphabet \to \alphabet^*$, then $h$ uniquely extends to a morphism $h: \alphabet^* \to \alphabet^*$, by defining for $a_0, a_1, \dots, a_{n-1} \in \alphabet$ that $h(a_0a_1\cdots{}a_{n-1}) = h(a_0)h(a_1)\cdots{}h(a_{n-1})$.
If for all $a \in \alphabet$ we have that $h(a) \in \alphabet$, we say that $h$ is \emph{strictly alphabetic}.
When $L$ is a language, we write $h^{-1}(L)$ for the language given by $\{ w \in \alphabet^* : h(w) \in L \}$.

A \emph{semi-Thue system}~\cite{book-otto-1993} is a reflexive and transitive relation $\leadsto$ on $\alphabet^*$ such that if $w \leadsto w'$ and $x \leadsto x'$, then $wx \leadsto w'x'$.
A \emph{reduction} is a Noetherian semi-Thue system.
We say that $x \in \alphabet^*$ is \emph{irreducible} by a reduction $\leadsto$ if $x \leadsto x'$ implies that $x = x'$.

A \emph{congruence} is an equivalence $\sim$ on $\alphabet^*$ such that when $u \sim v$ and $w \sim x$, also $uw \sim vx$.
If $\sim$ is a congruence on $\alphabet^*$, we write $\cc{w}{\sim}$ for the \emph{congruence class} containing $w \in \alphabet^*$.
A congruence $\sim$ is \emph{finitely generated} if for some finite $S \subseteq \alphabet^* \times \alphabet^*$, $\sim$ is the smallest congruence containing $S$; the set $S$ is said to \emph{generate} $\sim$.
Any language $L$ induces a \emph{syntactic congruence}, denoted $\equiv_L$, which is the relation where $w \equiv_L x$ holds precisely when, for all $u, v \in \alphabet^*$, we have $uwv \in L$ if and only if $uxv \in L$.
The \emph{language of contexts} of $w \in \alphabet^*$ w.r.t.\ a language $L$, denoted $L[w]$, is $\{ u\sharp{}v : uwv \in L \}$ (for a distinguished symbol $\sharp$).
It should be clear that $w \equiv_L x$ if and only if $L[w] = L[x]$.

A language $L$ is \emph{congruential}~\cite{book-otto-1993} if there exists a finitely-generated congruence $\sim$ and a finite set $T \subseteq \Sigma^*$ such that $L = \bigcup_{t \in T} \cc{t}{\sim}$.
We say that $L$ is \emph{regular} if its syntactic congruence induces finitely many congruence classes~\cite{nerode-1958}.

Decidability of congruence and of equivalence are closely related for congruential languages, as witnessed by the following lemma from~\cite{senizergues-1985}.

\begin{lemma}%
\label{lemma:congruential-decidable}
Let $\sim_1$ and $\sim_2$ be congruences generated by finite sets $S_1, S_2 \subseteq \alphabet^* \times \alphabet^*$ respectively, and let $T_1, T_2 \subseteq \alphabet^*$ be finite.
Let $L_1$ and $L_2$ be given by
\begin{mathpar}
L_1 = \bigcup_{t \in T_1} \cc{t}{\sim_1}
\and
L_2 = \bigcup_{t \in T_2} \cc{t}{\sim_2}
\end{mathpar}
If we can decide $L_1$ and $L_2$, as well as $\equiv_{L_1}$ and $\equiv_{L_2}$, then we can decide whether $L_1 = L_2$.
\end{lemma}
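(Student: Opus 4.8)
The plan is to collapse the infinitary statement $L_1 = L_2$ into four finite, decidable checks. I will rely on two elementary facts about syntactic congruences. First, every language $L$ is a union of $\equiv_L$-classes: if $w \in L$ and $w \equiv_L x$, then instantiating the context in the definition of $\equiv_L$ with $u = v = \epsilon$ gives $x \in L$, so $\cc{w}{\equiv_L} \subseteq L$. Second, whenever a language $L$ is a union of $\sim$-classes for some congruence $\sim$, that congruence $\sim$ refines $\equiv_L$; indeed $w \sim x$ implies $uwv \sim uxv$ for all $u, v \in \alphabet^*$, and since $L$ is a union of $\sim$-classes this yields $uwv \in L \iff uxv \in L$, i.e.\ $w \equiv_L x$. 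By construction each $L_i$ is a union of $\sim_i$-classes, so $\sim_i$ refines $\equiv_{L_i}$.

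The core step is a sufficient condition for containment: if every generating pair $(\ell, r) \in S_1$ satisfies $\ell \equiv_{L_2} r$, and $T_1 \subseteq L_2$, then $L_1 \subseteq L_2$. To see this, observe that the first hypothesis says the congruence $\equiv_{L_2}$ contains the generating set $S_1$; as $\sim_1$ is the least congruence containing $S_1$, it follows that $\sim_1$ refines $\equiv_{L_2}$. Hence for each $t \in T_1$ we have $\cc{t}{\sim_1} \subseteq \cc{t}{\equiv_{L_2}} \subseteq L_2$, where the final inclusion uses $t \in L_2$ together with the fact that $L_2$ is a union of $\equiv_{L_2}$-classes. Taking the union over $t \in T_1$ gives $L_1 \subseteq L_2$. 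Symmetrically, if every $(\ell, r) \in S_2$ satisfies $\ell \equiv_{L_1} r$ and $T_2 \subseteq L_1$, then $L_2 \subseteq L_1$.

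Next I would argue that these four conditions are also necessary for equality, so that together they are equivalent to $L_1 = L_2$. If $L_1 = L_2$, then the two languages are literally the same set and so induce the same syntactic congruence, $\equiv_{L_1} = \equiv_{L_2}$. Since $\sim_1$ refines $\equiv_{L_1} = \equiv_{L_2}$ and $S_1 \subseteq {\sim_1}$, every pair in $S_1$ lies in $\equiv_{L_2}$, and symmetrically every pair in $S_2$ lies in $\equiv_{L_1}$; moreover $T_1 \subseteq L_1 = L_2$ and $T_2 \subseteq L_2 = L_1$. Thus $L_1 = L_2$ holds if and only if all four conditions hold. Note that the per-inclusion conditions are merely sufficient, not individually necessary (e.g.\ taking $T_1 = \emptyset$ makes $L_1 \subseteq L_2$ trivially while $S_1$ is unconstrained); it is only their conjunction that is equivalent to full equality.

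Finally, each of the four conditions is decidable under the stated hypotheses: ``$(\ell,r) \in S_1$ implies $\ell \equiv_{L_2} r$'' is a finite conjunction of instances of the decidable relation $\equiv_{L_2}$ (and symmetrically for $S_2$ and $\equiv_{L_1}$), while $T_1 \subseteq L_2$ and $T_2 \subseteq L_1$ are finite conjunctions of decidable membership tests. The algorithm therefore evaluates all four conditions and reports $L_1 = L_2$ exactly when they all succeed. I expect the only real subtlety to be the core step above: the essential point is that finite generation of $\sim_i$ turns the universally quantified refinement of $\sim_i$ into the other language's syntactic congruence into a check over its finitely many generators, which is precisely what renders the otherwise infinitary equality test decidable.
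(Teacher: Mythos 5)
Your proposal is correct and follows essentially the same route as the paper: both reduce $L_1 = L_2$ to the four conditions $T_1 \subseteq L_2$, $T_2 \subseteq L_1$, ${\sim_1} \subseteq {\equiv_{L_2}}$, ${\sim_2} \subseteq {\equiv_{L_1}}$, and both exploit finite generation to replace the congruence inclusions by the finite checks $S_1 \subseteq {\equiv_{L_2}}$ and $S_2 \subseteq {\equiv_{L_1}}$. The only difference is that you spell out the sufficiency and necessity arguments that the paper leaves implicit.
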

\begin{proof}
Observe that $L_1 = L_2$ precisely when $T_1 \subseteq L_2$ and $T_2 \subseteq L_1$, as well as ${\sim_1} \subseteq {\equiv_{L_2}}$ and ${\sim_2} \subseteq {\equiv_{L_1}}$.
The first two inclusions are decidable, since $T_1$ and $T_2$ are finite, and $L_1$ and $L_2$ are decidable.
The latter two inclusions are also decidable, for they are equivalent to checking whether $S_1 \subseteq {\equiv_{L_2}}$ and $S_2 \subseteq {\equiv_{L_1}}$.
Thus, we can decide whether $L_1 = L_2$.
\end{proof}

\paragraph{Context-free grammars}
A \emph{(context-free) grammar} (\emph{CFG}) is a tuple $G = \angl{V, P, I}$ where $V$ is a finite set of symbols called \emph{nonterminals} with $I \subseteq V$ the \emph{initial nonterminals}, and $P \subseteq V \times {(\alphabet \cup V)}^*$ is a finite set of pairs called \emph{productions}.
We denote $\angl{A, \alpha} \in P$ by $A \production \alpha$.
We use $G$ to denote an arbitrary CFG $\angl{V, P, I}$, implicitly quantifying over all CFGs.

We write $\alphabetf$ for the set $\alphabet \cup V$ and define $\deriv_G$ as the smallest relation on $\alphabetf^*$ such that for all $\alpha, \gamma \in \alphabetf^*$ and $B \production \beta \in P$, we have $\alpha{}B\gamma \deriv_G \alpha\beta\gamma$.
For $\alpha \in \alphabetf^*$, the \emph{language of $\alpha$ in $G$}, denoted $\lang(G, \alpha)$ is $\{ w \in \alphabet^* : \alpha \deriv_G^* w \}$; the \emph{language of $G$}, denoted $\lang(G)$, is $\bigcup_{A \in I} \lang(G, A)$.
We say that $L \subseteq \alphabet^*$ is a \emph{context-free language} (\emph{CFL}) if $L = \lang(G)$ for some CFG $G$.

As an example of a CFG, let us fix $G_D = \angl{V_D, P_D, I_D}$ as a CFG over the alphabet $\{ \lp, \rp \}$, where $V_D = I_D = \{ S \}$, and $P_D$ contains the rules $S \production \epsilon$ and $S \production \lp{}S\rp$ and $S \production SS$.
The language of $G_D$ is the well-known \emph{Dyck language}, which consists of strings of well-nested parentheses, and which we shall use as a recurring example throughout this paper.

If $\lang(G, \alpha)$ is non-empty, we write $\vartheta_G(\alpha)$ for the $\preceq$-minimum of $\lang(G, \alpha)$.
Now, if $\lang(G, \alpha\beta)$ is non-empty, then $\vartheta_G(\alpha\beta) = \vartheta_G(\alpha)\vartheta_G(\beta)$.
We define $\leadsto_G$ as the smallest semi-Thue system such that whenever $A \production \alpha \in P$ and $\lang(G, \alpha) \neq \emptyset$, also $\vartheta_G(\alpha) \leadsto_G \vartheta_G(A)$.
As an example, for $G_D$ we see that $\vartheta_{G_D}(S) = \epsilon$, and hence $\leadsto_{G_D}$ is generated solely by the rule $\lp\rp \leadsto_{G_D} \epsilon$.

We observe that $\leadsto_G$ is a reduction (regardless of $G$), and that for all $A \in V$ and $w \in \lang(G, A)$ it holds that $w \leadsto_G \vartheta_G(A)$.
We write $\irr_G$ for the set of words irreducible by $\leadsto_G$.
Note that $\irr_G$ is regular: it is the complement of the regular language of words containing the left-hand side of a rule defining $\leadsto_G$, and regular languages are closed under complementation.
For instance, it is not hard to see that $\irr_{G_D} = \{ \rp^n\lp^m : n, m \geq 0 \}$.

We say that $G$ is \emph{weakly $\omega$-reduced} when for $A \in V \setminus I$ we have that $\lang(G, A)$ is infinite, and for all productions $A \production \alpha$ where $\lang(G, A)$ is finite, we have that $\alpha \in \alphabet^*$.

\begin{lemma}%
\label{lemma:cc-transformations}
\hspace{-2mm}\footnote{%
    Details appear in Appendix~\ref{appendix:transformations}.
}
Let $G = \angl{V, P, I}$ be a CFG, let $R$ be a regular language and let $h: \alphabet^* \to \alphabet^*$ be a strictly alphabetic morphism.
All of the following hold:
\begin{enumerate}[(i)]
    \setlength{\itemsep}{0em}
    \item
    We can construct a weakly $\omega$-reduced CFG $G_\omega = \angl{V_\omega, P_\omega, I_\omega}$ such that $\lang(G_\omega) = \lang(G)$ and $V_\omega \subseteq V$; moreover, when $A \in V_\omega$ it holds that $\lang(G, A) = \lang(G_\omega, A)$.

    \item
    We can construct a CFG $G^h = \angl{V^h, P^h, I^h}$ such that $\lang(G^h) = h^{-1}(\lang(G))$ and $V^h \subseteq V$; moreover, when $A \in V^h$ it holds that $h^{-1}(\lang(G, A)) = \lang(G^h, A)$.

    \item
    We can construct a CFG $G_R = \angl{V_R, P_R, I_R}$ such that $\lang(G_R) = \lang(G) \cap R$; moreover, when $A \in V_R$ there exist $A' \in V$ and $w \in \alphabet^*$ such that $\lang(G_R, A) = \lang(G, A') \cap \cc{w}{\equiv_R}$.
\end{enumerate}
\end{lemma}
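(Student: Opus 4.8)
The plan is to treat each item as a closure construction on CFGs, carried out so that we can track the language generated by each individual nonterminal rather than only the language of the grammar as a whole. In each case the global claim about the constructed grammar will follow from the per-nonterminal claim, together with the fact that its language is the union of $\lang(\cdot, A)$ over its initial nonterminals and that the operation in question commutes with unions.

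For (i), I would first note that it is decidable which nonterminals $A$ have $\lang(G, A)$ finite, and that in that case one can compute the finite set $\lang(G, A) \subseteq \alphabet^*$ explicitly; this set of ``finite'' nonterminals depends only on the languages and is unaffected by the transformations below. I would then eliminate every \emph{non-initial} nonterminal $X$ with $\lang(G, X)$ finite by exhaustive substitution: replace each production $B \production \alpha X \beta$ by the finitely many productions $B \production \alpha w \beta$ for $w \in \lang(G, X)$ (and simply delete the production when $\lang(G, X) = \emptyset$), repeating until $X$ occurs on no right-hand side, after which $X$ and its productions are discarded. Each such step preserves $\lang(G, B)$ for every surviving $B$. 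Finally, for each remaining nonterminal $A$ with $\lang(G, A)$ finite --- necessarily $A \in I$ --- I would replace all of $A$'s productions by the terminal productions $A \production w$, one for each $w \in \lang(G, A)$. The resulting $G_\omega$ satisfies $V_\omega \subseteq V$ (we only ever delete nonterminals), every non-initial nonterminal now generates an infinite language, and every finite nonterminal has only terminal right-hand sides; preservation of each surviving $\lang(G, A)$ is immediate.

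For (ii), the key observation is that a strictly alphabetic morphism is length-preserving, and hence $h^{-1}$ commutes with concatenation, $h^{-1}(L_1 L_2) = h^{-1}(L_1)\, h^{-1}(L_2)$, since any factorisation of $h(w)$ induces a factorisation of $w$ at the same position. I would keep $V^h = V$ and $I^h = I$, and obtain $P^h$ from $P$ by replacing, in each production $A \production X_1 \cdots X_n$, every terminal $X_i = a$ by some $b \in \alphabet$ with $h(b) = a$ (leaving nonterminals untouched), adding one production for each such choice across the terminal positions. An induction on derivations, using that $h^{-1}$ commutes with concatenation and that $h^{-1}(\{a\}) = \{ b : h(b) = a \}$, then shows $\lang(G^h, A) = h^{-1}(\lang(G, A))$ for every $A \in V$; summing over $I$ and using that $h^{-1}$ commutes with unions yields $\lang(G^h) = h^{-1}(\lang(G))$.

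For (iii), I would use the \emph{syntactic monoid} of $R$ rather than an arbitrary recognising automaton, precisely so that the per-nonterminal language comes out as a single congruence class. Write $M = \alphabet^* / {\equiv_R}$ (finite, as $R$ is regular) and let $\eta \colon \alphabet^* \to M$, $\eta(w) = \cc{w}{\equiv_R}$, be the quotient morphism, so that $R = \eta^{-1}(F)$ for $F = \{ \cc{t}{\equiv_R} : t \in R \}$. I would take $V_R = V \times M$ with intended invariant $\lang(G_R, (A, m)) = \lang(G, A) \cap \eta^{-1}(m)$, noting that $\eta^{-1}(m) = \cc{w}{\equiv_R}$ for any representative $w$ of $m$, which gives the required form with $A' = A$. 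The productions mirror $P$ while threading the monoid: for each $A \production X_1 \cdots X_n$ in $P$ and each factorisation $m = m_1 \cdots m_n$ in $M$, add $(A, m) \production Y_1 \cdots Y_n$, where $Y_i = (X_i, m_i)$ when $X_i \in V$, and where a terminal $X_i = a$ forces $m_i = \eta(a)$ and contributes the terminal $a$ itself. Setting $I_R = I \times F$ and proving the invariant by induction on derivations --- using that $\eta$ is a morphism, so $\eta(w_1 \cdots w_n) = \eta(w_1) \cdots \eta(w_n)$ aligns with the chosen factorisation --- yields $\lang(G_R) = \lang(G) \cap R$. I expect this item to be the main obstacle: one must verify the threading invariant in both directions (every $G_R$-derivation projects to a $G$-derivation landing in the prescribed class, and conversely every $G$-derivation can be annotated with monoid elements), taking care with the terminal case and with empty or unit factorisations. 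Working with the syntactic monoid rather than a DFA is exactly what makes $\lang(G_R, (A,m))$ a single class $\cc{w}{\equiv_R}$, as the statement demands, rather than a union of classes.
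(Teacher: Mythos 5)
Your proposal is correct and matches the paper's own constructions in all essentials: substituting away finite-language nonterminals for (i), taking preimages of productions under the (strictly alphabetic) morphism for (ii), and annotating nonterminals with syntactic-congruence data of $R$ for (iii) --- your syntactic-monoid elements $(A,m)$ are just the paper's class representatives $A^x$ in different notation, and both choices serve exactly the purpose you identify, namely making each $\lang(G_R, A)$ a single $\equiv_R$-class rather than a union of classes. The only deviations are cosmetic: the paper eliminates finite nonterminals in one simultaneous substitution step (including initial ones on right-hand sides) rather than iteratively, which changes nothing in the result.
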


\paragraph{Pushdown automata}
A \emph{pushdown automaton} (\emph{PDA}) is a tuple $M = \angl{Q, \rightarrow, q^0, F}$ where $Q$ is a finite set of \emph{states}, $q^0 \in Q$ is the \emph{initial state}, $F \subseteq Q$ are the \emph{accepting states} and $\rightarrow\ \subseteq Q \times (\alphabet \cup \{ \epsilon \}) \times \alphabett \times \alphabett^* \times Q$ is the (finite) \emph{transition relation}.
When $\angl{q, a, \sigma, \rho, q'} \in\ \rightarrow$, we write $q \pdat{a}{\sigma}{\rho} q'$.
The set of \emph{configurations} of $M$, denoted $\mathcal{C}_M$, is $Q \times \alphabet^* \times \alphabett^*$.
We define $\progressp_M$ as the smallest relation on $\mathcal{C}_M$ such that whenever $q \pdat{a}{\sigma}{\rho} q'$ and $w \in \alphabet^*$ as well as $\tau \in \alphabett^*$, it holds that $\angl{q, aw, \sigma\tau} \progressp_M \angl{q', w, \rho\tau}$.
The \emph{language} of $M$, denoted $\lang(M)$, is%
\footnote{%
    This definition is non-standard, in that upon acceptance the machine should be in an accepting state, \emph{and} the stack contains exactly $\$$. % chktex 40
    A (D){}PDA with this acceptance condition can easily be converted into an equivalent (D){}PDA with the standard acceptance condition, provided that its transitions preserve the end-of-stack marker; this is the case for all DPDAs in this paper.
    We omit details for the sake of brevity.
}
\[\left\{ w \in \alphabet^* : \angl{q^0, w, \$} \progressp_M^* \angl{q, \epsilon, \$},\ q \in F \right\}\]

$M$ is a \emph{deterministic PDA} if,
\begin{inparaenum}[(i)]
    \item for all $q \in Q$, $a \in \alphabet \cup \{ \epsilon \}$, and $\sigma \in \alphabett$, there is at most one $\rho \in \alphabett^*$ and at most one $q' \in Q$ such that $q \pdat{a}{\sigma}{\rho} q'$, and,
    \item for all $q' \in Q$ and $\rho \in \alphabett^*$ such that $q \pdat{\epsilon}{\sigma}{\rho} q'$, there are no $q'' \in Q$, $a \in \alphabet$ and $\rho' \in \alphabett^*$ such that $q \pdat{a}{\sigma}{\rho'} q''$.
\end{inparaenum}

If $M$ is a PDA and $L$ a language such that $\lang(M) = L$, we say that $M$ \emph{accepts} $L$.
It is well-known that a language is a CFL if and only if it is accepted by a PDA~\cite{chomsky-1962}.
A language accepted by a deterministic PDA is said to be a \emph{deterministic CFL} (\emph{DCFL}).
A CFG $G$ whose language is a DCFL is said to be a \emph{deterministic CFG} (DCFG).

As an example of a PDA, consider $M_D = \angl{\{ q \}, \rightarrow_D, q, \{ q \}}$, where $\rightarrow_D$ contains the rules $q \pdat{\lp}{\$}{\lp\$}_D q$, $q \pdat{\lp}{\lp}{\lp\lp}_D q$ and $q \pdat{\rp}{\lp}{\epsilon}_D q$.
This PDA happens to be deterministic, and it is not hard to see that it accepts the Dyck language, $\lang(G_D)$; this makes $G_D$ a DCFG\@.

\section{Clark-congruential languages}%
\label{section:cc-languages}

We now turn our attention to \emph{Clark-congruential languages}.
These are context-free languages that are defined by grammars where every nonterminal has a language that is contained in a congruence class of its grammar; more formally, we work with the following definition.

\begin{definition}
$G$ is \emph{Clark-congruential} (CC) if for all $A \in V$, there exists an $x_A \in \alphabet^*$ s.t. $\lang(G, A)$ is a subset of $\cc{x_A}{\equiv_{\lang(G)}}$.
A language $L$ is CC if $L = \lang(G)$ for a CC grammar $G$.
\end{definition}

As an example of a CC grammar, consider $G_D$.
There, we find that if $w \in \lang(G_D, S)$, then $w$ consists of a string of balanced parentheses; hence, if $uwv \in \lang(G_D, S)$, then $uv \in \lang(G_D, S)$, and vice versa.
Consequently, it holds that for $w \in \lang(G_D, S)$ we have $w \equiv_{\lang(G_D)} \epsilon$.

CC grammars can be seen as a generalization of pre-NTS grammars~\cite{autebert-boasson-1992}, which are themselves a generalization of NTS grammars~\cite{boasson-1980,senizergues-1985,boasson-senizergues-1985}.
While the class of CC \emph{grammars} strictly contains the class of pre-NTS grammars, and thus the class of pre-NTS languages is contained in the class of CC languages, it remains an open question whether this inclusion is strict on the level of languages; likewise, the class of NTS grammars is contained in the class of pre-NTS grammars, but the question of equal expressiveness remains open.

\subsection{Congruence and equivalence}

We now consider the question of deciding equivalence of CC grammars.
Our strategy here will be to verify the preconditions of Lemma~\ref{lemma:congruential-decidable} w.r.t.\ CC languages.
Thus, our first task is to show that all CC languages are congruential; this is indeed the case.

\begin{lemma}%
\label{lemma:clark-congruential-implies-congruential}
If $L$ is a CC language, then $L$ is congruential.
\end{lemma}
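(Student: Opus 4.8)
The plan is to construct the witnessing congruence explicitly from the reduction $\leadsto_G$ attached to a CC grammar $G$ with $\lang(G) = L$, rather than trying to use the syntactic congruence $\equiv_L$ directly. The tension driving the whole argument is that $\equiv_L$ saturates $L$ automatically but need not be finitely generated, whereas $\leadsto_G$ \emph{is} finitely generated (since $P$ is finite) but a priori has nothing to do with $\equiv_L$; the Clark-congruential hypothesis is exactly what bridges the two.

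Concretely, first I would take $\sim$ to be the congruence generated by the finite set $S = \{ (\vartheta_G(\alpha), \vartheta_G(A)) : A \production \alpha \in P,\ \lang(G, \alpha) \neq \emptyset \}$ of rules defining $\leadsto_G$, now read as unordered pairs; this is finitely generated. The key step is then to show ${\sim} \subseteq {\equiv_L}$: for each generating rule, $\vartheta_G(\alpha) \in \lang(G, \alpha) \subseteq \lang(G, A)$ and $\vartheta_G(A) \in \lang(G, A)$, so CC-ness places both words in the single class $\cc{x_A}{\equiv_L}$, whence $\vartheta_G(\alpha) \equiv_L \vartheta_G(A)$. Since $\equiv_L$ is a congruence containing $S$ and $\sim$ is the least such, ${\sim} \subseteq {\equiv_L}$ follows; the same minimality observation, applied to $\leadsto_G$ as the least semi-Thue system containing the directed rules $S$ and to $\sim$ as a (symmetric) semi-Thue system, also yields ${\leadsto_G} \subseteq {\sim}$.

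Finally I would set $T = \{ \vartheta_G(A) : A \in I,\ \lang(G, A) \neq \emptyset \}$, which is finite because $I$ is, and verify $L = \bigcup_{t \in T} \cc{t}{\sim}$. The inclusion $\subseteq$ uses that any $w \in L$ lies in some $\lang(G, A)$ with $A \in I$, so $w \leadsto_G \vartheta_G(A)$ and hence $w \sim \vartheta_G(A)$. The inclusion $\supseteq$ uses ${\sim} \subseteq {\equiv_L}$: from $w \sim \vartheta_G(A)$ we get $w \equiv_L \vartheta_G(A)$, and instantiating the defining contexts of $\equiv_L$ at $u = v = \epsilon$ together with $\vartheta_G(A) \in \lang(G, A) \subseteq L$ forces $w \in L$. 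I expect the step requiring the most care to be ${\sim} \subseteq {\equiv_L}$, since this is the only place the Clark-congruential hypothesis enters; without it the generated congruence could be strictly coarser than $\equiv_L$, and the reverse inclusion $L \supseteq \bigcup_{t \in T} \cc{t}{\sim}$ would break down.
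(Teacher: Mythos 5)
Your proof is correct and takes essentially the same route as the paper: it builds $\sim$ as the congruence generated by the (finitely many) rules of $\leadsto_G$, uses CC-ness to establish ${\sim} \subseteq {\equiv_L}$, and characterises $L$ as the union of the $\sim$-classes of $\vartheta_G(A)$ for initial $A$ with non-empty language. The only difference is that you spell out two steps the paper leaves implicit, namely the minimality arguments giving ${\leadsto_G} \subseteq {\sim}$ and $S \subseteq {\equiv_L} \Rightarrow {\sim} \subseteq {\equiv_L}$, which is a sound elaboration rather than a different approach.
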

\begin{proof}
Let $G$ be a CC grammar such that $L = \lang(G)$ and choose $\sim$ as the smallest congruence containing $\leadsto_G$.
Obviously, $\sim$ is finitely generated.
We now claim that
\[
    \lang(G) = \bigcup_{A \in I,\, \lang(G, A) \neq \emptyset} \cc{\vartheta_G(A)}{\sim}
\]
For the inclusion from left to right, note that if $w \in \lang(G, A)$ for an $A \in I$, then $w \leadsto_G \vartheta_G(A)$, and hence $w \sim \vartheta_G(A)$; thus, $w \in \cc{\vartheta_G(A)}{\sim}$.
For the other inclusion, note that since $G$ is CC, ${\sim} \subseteq {\equiv_{\lang(G)}}$.
Hence, if $w \sim \vartheta_G(A)$ with $A \in I$, then $w \equiv_{\lang(G)} \vartheta_G(A)$, and thus $w \in \lang(G)$.
\end{proof}

We use $G$ to denote an arbitrary CC grammar, and set out to validate the second assumption of Lemma~\ref{lemma:congruential-decidable}, i.e., to show that if $G$ is CC, then $\equiv_{\lang(G)}$ is decidable.
To this end, we observe the following; details are in Appendix~\ref{appendix:transformations}.

\begin{lemma}
The grammar transformations from Lemma~\ref{lemma:cc-transformations} preserve Clark-congruentiality.
\end{lemma}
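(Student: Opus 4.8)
The plan is to prove that each of the three constructions preserves the defining property of a CC grammar: that every nonterminal's language is contained in a single congruence class of the grammar's language. I fix a CC grammar $G$. To show that a transformed grammar $G'$ is CC, I would first observe that it suffices to check, for each nonterminal $A$ of $G'$, that any two words in $\lang(G', A)$ are $\equiv_{\lang(G')}$-equivalent: if $\lang(G', A)$ is empty any class will do, and otherwise I may take the witnessing representative to be any of its elements.

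Next I would isolate two elementary facts about syntactic congruences. First, for any $L_1, L_2 \subseteq \alphabet^*$ we have ${\equiv_{L_1}} \cap {\equiv_{L_2}} \subseteq {\equiv_{L_1 \cap L_2}}$, since if $u$ and $v$ are interchangeable both in $L_1$ and in $L_2$, then $puq \in L_1 \cap L_2 \iff pvq \in L_1 \cap L_2$ for every context $p, q \in \alphabet^*$. Second, for any morphism $h$ and language $L$, if $h(u) \equiv_L h(v)$ then $u \equiv_{h^{-1}(L)} v$, because membership of $puq$ in $h^{-1}(L)$ is exactly membership of $h(p)h(u)h(q)$ in $L$, which is unaffected by replacing $h(u)$ with $h(v)$.

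With these in hand, each case follows quickly. For (i), since $\lang(G_\omega) = \lang(G)$ and $\lang(G_\omega, A) = \lang(G, A)$ for every $A \in V_\omega \subseteq V$, the congruence class witnessing CC-ness of $A$ in $G$ works unchanged in $G_\omega$. For (ii), I would take $u, v \in \lang(G^h, A) = h^{-1}(\lang(G, A))$; then $h(u), h(v) \in \lang(G, A)$, so CC-ness of $G$ gives $h(u) \equiv_{\lang(G)} h(v)$, and the second fact with $L = \lang(G)$ yields $u \equiv_{\lang(G^h)} v$. For (iii), I would take $u, v \in \lang(G_R, A) = \lang(G, A') \cap \cc{w}{\equiv_R}$; membership in $\lang(G, A')$ together with CC-ness gives $u \equiv_{\lang(G)} v$, while membership in $\cc{w}{\equiv_R}$ gives $u \equiv_R v$, so the first fact with $L_1 = \lang(G)$ and $L_2 = R$ gives $u \equiv_{\lang(G_R)} v$.

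I do not expect a serious obstacle: the mathematical content sits entirely in the two congruence facts, both of which fall directly out of the definition of the syntactic congruence. The only points needing mild care are the choice of a representative when a nonterminal's language happens to be empty, and keeping track of which language each syntactic congruence refers to — in particular, using $\lang(G_R) = \lang(G) \cap R$ in case (iii) so that the intersection fact is applied to the correct language.
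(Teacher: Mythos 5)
Your proposal is correct and follows essentially the same route as the paper: the paper likewise reduces CC-ness of each transformed grammar to pairwise $\equiv$-equivalence of words in each nonterminal's language, and then carries out precisely your two congruence arguments, inlined rather than isolated as standalone facts (case (ii) replaces $h(u)$ by $h(v)$ inside a context mapped through $h$, and case (iii) combines $u \equiv_{\lang(G)} v$ with $u \equiv_R v$ to conclude $u \equiv_{\lang(G) \cap R} v$). Your packaging of these as general lemmas about syntactic congruences is a clean but inessential refactoring of the same proof.
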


The algorithm that we describe to decide $\equiv_{\lang(G)}$ is essentially a generalization of the one found in~\cite{autebert-boasson-1992}.
Before we dive into formal details, it helps to sketch a high-level roadmap of the steps required to establish the desired result, in analogy with the steps in op.\ cit.
We proceed as follows:
\begin{enumerate}[(I)]
    \item\label{step:confluent}
    We argue that, when $G$ is CC, $\leadsto_G$ is almost confluent: it can be used to decide $w \in \lang(G)$ by reducing $w$ using any strategy, until we reach an irreducible word.

    \item\label{step:transform}
    We show that, for a given $w \in \alphabet^*$, we can use the transformations discussed earlier to construct a particular CC grammar $G_w$, which has a number of useful properties.

    \item\label{step:create-dpda}
    From $G_w$, we create a DPDA $M_w$ accepting a language very close to $\lang(G)[w]$; this DPDA exploits the almost-confluent nature of $\leadsto_{G_w}$ and the properties of $G_w$.

    \item\label{step:wrapping-up}
    We argue that $w \equiv_{\lang(G)} x$ if and only if $\lang(M_w) = \lang(M_x)$.
    Since the latter is decidable~\cite{senizergues-2001}, we can decide the former.
\end{enumerate}

\paragraph{Step~\eqref{step:confluent}: reduction is (almost) confluent}

If $G$ is pre-NTS, then $\leadsto_G$ is confluent on $\lang(G)$, but not necessarily on $\alphabet^*$~\cite{autebert-boasson-1992}.
For CC languages, this property is lost.
As an example, consider the CC grammar $G'$ with the rules $S \production aS$, $S \production a$, $T \production aaT$ and $T \production \epsilon$, and both $S$ and $T$ initial.
We find that $\vartheta_{G'}(S) = a$ and $\vartheta_{G'}(T) = \epsilon$, and hence $aa \leadsto_{G'} a$ as well as $aa \leadsto_{G'} \epsilon$, but both $a$ and $\epsilon$ are irreducible in $\leadsto_{G'}$.

On the positive side, $\leadsto_G$ is still useful in deciding membership of $\lang(G)$:
\begin{lemma}%
\label{lemma:derivations-vs-reductions}
There exists an $A \in I$ with $x \leadsto_G \vartheta_G(A)$ if and only if $x \in \lang(G)$.
\end{lemma}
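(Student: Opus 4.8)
The plan is to prove the two implications separately, and to note that only the forward direction (from reduction to membership) requires Clark-congruentiality; the converse holds for every grammar and follows directly from a property of $\leadsto_G$ already recorded in the preliminaries.

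For the converse, suppose $x \in \lang(G)$. Since $\lang(G) = \bigcup_{A \in I} \lang(G, A)$, there is some $A \in I$ with $x \in \lang(G, A)$. In particular $\lang(G, A)$ is non-empty, so $\vartheta_G(A)$ is defined, and the recorded observation that $w \in \lang(G, A)$ implies $w \leadsto_G \vartheta_G(A)$ immediately yields $x \leadsto_G \vartheta_G(A)$. This half uses nothing beyond the basic properties of $\leadsto_G$.

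The forward direction is where the work lies, and the key step I would isolate is the \emph{soundness} of the reduction with respect to the syntactic congruence, namely ${\leadsto_G} \subseteq {\equiv_{\lang(G)}}$. To establish it, consider an arbitrary generator $\vartheta_G(\alpha) \leadsto_G \vartheta_G(A)$, arising from a production $A \production \alpha \in P$ with $\lang(G, \alpha) \neq \emptyset$. Since $A \production \alpha$, any derivation $\alpha \deriv_G^* w$ extends to $A \deriv_G \alpha \deriv_G^* w$, so $\lang(G, \alpha) \subseteq \lang(G, A)$; hence both $\vartheta_G(\alpha)$ and $\vartheta_G(A)$ lie in $\lang(G, A)$. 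Because $G$ is CC, $\lang(G, A)$ is contained in a single class $\cc{x_A}{\equiv_{\lang(G)}}$, whence $\vartheta_G(\alpha) \equiv_{\lang(G)} \vartheta_G(A)$. As $\equiv_{\lang(G)}$ is itself reflexive, transitive and concatenation-compatible --- that is, a semi-Thue system containing every generator of $\leadsto_G$ --- and $\leadsto_G$ is by definition the \emph{smallest} such system, the containment ${\leadsto_G} \subseteq {\equiv_{\lang(G)}}$ follows. (This is essentially the inclusion already exploited in the proof of Lemma~\ref{lemma:clark-congruential-implies-congruential}.)

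Granting soundness, the forward direction is immediate: if $x \leadsto_G \vartheta_G(A)$ for some $A \in I$, then $\vartheta_G(A)$ is defined, so $\vartheta_G(A) \in \lang(G, A) \subseteq \lang(G)$, while soundness gives $x \equiv_{\lang(G)} \vartheta_G(A)$. Instantiating the definition of the syntactic congruence with the empty context (i.e.\ $u = v = \epsilon$) turns $x \equiv_{\lang(G)} \vartheta_G(A)$ into the equivalence $x \in \lang(G)$ iff $\vartheta_G(A) \in \lang(G)$, and since the right-hand side holds we conclude $x \in \lang(G)$. The main obstacle is thus precisely the soundness containment ${\leadsto_G} \subseteq {\equiv_{\lang(G)}}$: it is exactly here that Clark-congruentiality is indispensable, as it is what forces the two witnesses $\vartheta_G(\alpha)$ and $\vartheta_G(A)$ --- which a priori merely share membership in $\lang(G, A)$ --- to be indistinguishable under $\equiv_{\lang(G)}$.
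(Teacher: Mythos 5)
Your proof is correct and takes essentially the same route as the paper: the right-to-left direction is the recorded observation that $w \in \lang(G, A)$ implies $w \leadsto_G \vartheta_G(A)$, and the left-to-right direction rests on the soundness inclusion ${\leadsto_G} \subseteq {\equiv_{\lang(G)}}$, which the paper invokes implicitly via (the proof of) Lemma~\ref{lemma:clark-congruential-implies-congruential} and which you instead prove explicitly. The detail you supply --- that $\lang(G, \alpha) \subseteq \lang(G, A)$ places both $\vartheta_G(\alpha)$ and $\vartheta_G(A)$ in one class of $\equiv_{\lang(G)}$ by Clark-congruentiality, and that $\equiv_{\lang(G)}$ is itself a semi-Thue system containing the generators, so it contains the smallest one --- is precisely the justification the paper leaves to the reader.
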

\begin{proof}
For the direction from left to right, note that if $x \leadsto_G \vartheta_G(A)$, then $x \equiv_{\lang(G)} \vartheta_G(A)$, and therefore $x \in \cc{\vartheta_G(A)}{\equiv_{\lang(G)}}$.
Since $\vartheta_G(A) \in \lang(G)$, also $x \in \lang(G)$.
For the other direction, note that if $x \in \lang(G)$, then $x \in \lang(G, A)$ for some $A \in I$, and therefore $x \leadsto_G \vartheta_G(A)$.
\end{proof}

Using Lemma~\ref{lemma:derivations-vs-reductions}, we can simply apply reductions (using any strategy) to $w \in \alphabet^*$ from $\leadsto_G$, until we reach an irreducible word $w_r$.
This process terminates, since $\leadsto_G$ is Noetherian.
At that point, either $w_r = \vartheta_G(A)$ for some $A \in I$, in which case $w \in \lang(G)$, or $w_r \neq \vartheta_G(A)$ for all $A \in I$, in which case $w_r \not\in \lang(G)$ (since $w_r \in \irr_G$), and since $w \equiv_{\lang(G)} w_r$, also $w \not\in \lang(G)$.

As an example, consider the word $\lp\lp\rp\lp\rp\rp$, which can be reduced using $\leadsto_{G_D}$ as follows:
\[
    \lp\lp\rp\underline{\lp\rp}\rp\lp\rp
        \leadsto_{G_D} \lp\underline{\lp\rp}\rp\lp\rp
        \leadsto_{G_D} \lp\rp\underline{\lp\rp}
        \leadsto_{G_D} \underline{\lp\rp}
        \leadsto_{G_D} \epsilon = \vartheta_{G_D}(S)
\]
And hence $\lp\lp\rp\lp\rp\rp \in \lang(G_D)$.
On the other hand, the word $\lp\lp\rp$ can be reduced to $\lp$ only, and therefore Lemma~\ref{lemma:derivations-vs-reductions} allows us to conclude that $\lp\lp\rp \not\in \lang(G_D)$.

The (implicit) precondition that $G$ is CC is necessary to establish Lemma~\ref{lemma:derivations-vs-reductions}.
As an example, consider the grammar $G'$ with rules $S \production a$, $S \production b$ and $T \production ab$, with both $S$ and $T$ initial.
This grammar is not CC\@.
If we assume that $a \preceq b$, then $\leadsto_G$ is generated by the rule $b \leadsto_G a$.
We then find that $bb \leadsto_G ab$ and $\vartheta_G(T) = ab$, while $bb \not\in \lang(G)$.

\paragraph{Step~\eqref{step:transform}: construct $G_w$}

We now proceed to construct a CC grammar $G_w$ from $G$.
This is done by progressively applying the CC-preserving transformations described in Lemma~\ref{lemma:cc-transformations}.

First, we augment $\alphabet$ by adding for $a \in \alphabet$ the (unique) letter $a'$, i.e., every letter gains a ``primed'' version; this does not change $\lang(G)$, or the fact that $G$ is CC\@.
We write $\alphabet_0$ for the original alphabet, and $\alphabet_1$ for the set of newly added letters.
Moreover, let $h: \alphabet^* \to \alphabet^*$ be the morphism that removes the primes from $w \in \alphabet^*$, i.e., the morphism defined by setting $h(a) = a$ for $a \in \alphabet_0$ and $h(a') = a$ for $a' \in \alphabet_1$.
We write $w'$ for the ``primed copy'' of $w$, i.e., the unique element of $\alphabet_1^*$ such that $h(w') = w$.
We proceed to define $G_w$ in steps, as follows:
\begin{itemize}
    \item
    Let $G' = \angl{V', P', I'}$ be such that $\lang(G') = h^{-1}(\lang(G))$.

    \item
    Let $G_w' = \angl{V_w', P_w', I_w'}$ be such that $\lang(G_w') = \lang(G') \cap Rw'R$, where $R = \irr_G \cap \alphabet_0^*$.

    \item
    Let $G_w = \angl{V_w, P_w, I_w}$ be such that $\lang(G_w) = \lang(G_w')$, and $G_w$ is weakly $\omega$-reduced.
\end{itemize}

By Lemma~\ref{lemma:cc-transformations}, these grammars are CC\@.
Without trying to get ahead of ourselves, we note that $\lang(G_w)$ is already somewhat close to $L[w]$.
After all, we know that $\lang(G_w) = \{ uw'v : u, v \in \irr_G, uwv \in L \}$.
The difference between $L[w]$ and $\lang(G_w)$ comes down to having $\sharp$ or $w'$ separate the parts of the words, and whether those parts need to be in $\irr_G$.

Some analysis of $G_w'$ now gives us the following.

\begin{lemma}%
\label{lemma:unprimed-word-finite-language}
Let $A \in I_w'$.
If $\lang(G_w', A) \cap \alphabet_0^* \neq \emptyset$ and $w' \neq \epsilon$, then $\lang(G_w', A) = \{ \vartheta_G(A) \}$.
\end{lemma}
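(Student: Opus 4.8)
The plan is to work from the explicit description of $\lang(G_w', A)$ furnished by Lemma~\ref{lemma:cc-transformations}. Unfolding parts (ii) and (iii) of that lemma for the two transformations used to build $G_w'$, there is a nonterminal of $G$---which, following the conventions of those constructions, we keep calling $A$---and a word $z$ with $\lang(G_w', A) = h^{-1}(\lang(G, A)) \cap \cc{z}{\equiv_{Rw'R}}$. Thus every word of $\lang(G_w', A)$ lies in one and the same $\equiv_{Rw'R}$-class, and its image under $h$ lies in $\lang(G, A)$. Write $K = Rw'R$ and fix, using the hypothesis, an unprimed witness $y \in \lang(G_w', A) \cap \alphabet_0^*$.

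First I would prove that $y = \vartheta_G(A)$. Since $y \in \alphabet_0^*$ we have $h(y) = y \in \lang(G, A)$, so $y \leadsto_G \vartheta_G(A)$; it therefore suffices to show that $y$ is irreducible. For this I would use that $A$ is useful (we may assume, as is standard, that $G_w'$ has no useless nonterminals), so that $y$ occurs as a factor of some $m \in \lang(G_w') \subseteq K$, say $m = u w' v$ with $u, v \in R = \irr_G \cap \alphabet_0^*$. Because $y$ is unprimed while the block $w'$ consists solely of letters of $\alphabet_1$---this is where $w' \neq \epsilon$ enters---the occurrence of $y$ cannot meet the $w'$-block, so $y$ is a factor of $u$ or of $v$. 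As every left-hand side of a rule of $\leadsto_G$ is a word over $\alphabet_0$, it cannot straddle the $\alphabet_1$-block, and a redex inside $y$ would already be a redex inside the irreducible $u$ (resp.\ $v$); hence $y$ is irreducible and $y = \vartheta_G(A)$. In particular $y \in R$.

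It remains to rule out any further word. Let $p \in \lang(G_w', A)$ be arbitrary, so that $p \equiv_{K} y$. Since $y \in R$ and $\epsilon \in R$, we have $y w' \in K$, and therefore $p w' \in K$ as well. But every word of $K = Rw'R$ contains exactly $|w'|$ letters of $\alphabet_1$, whereas $p w'$ contains $|w'|$ of them plus those occurring in $p$; hence $p$ contains no letter of $\alphabet_1$, i.e.\ $p \in \alphabet_0^*$. The argument of the previous paragraph, applied with $p$ in place of $y$, then yields $p = \vartheta_G(A)$. We conclude $\lang(G_w', A) = \{\vartheta_G(A)\}$.

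The main obstacle is the irreducibility step: showing that the unprimed witness $y$ is $\leadsto_G$-irreducible. The delicate points there are that one must situate $y$ inside a genuine word of $\lang(G_w')$, which is what forces $y$ into a single irreducible $\alphabet_0$-block flanking the primed marker, and that this is precisely where the hypothesis $w' \neq \epsilon$ and the irreducibility of the flanking factors are used; once irreducibility is secured, both the identification with $\vartheta_G(A)$ and the counting argument excluding primed words are routine.
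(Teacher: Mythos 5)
Your proof is correct, and its first half is essentially the paper's argument: embed the unprimed witness $y$ in some $uw'v \in \lang(G_w') \subseteq \irr_G w'\irr_G$ (both proofs silently assume, without loss of generality, that $A$ is useful), use $w' \neq \epsilon$ to force the occurrence of $y$ to lie wholly inside the irreducible flank $u$ or $v$, deduce $y \in \irr_G$ since a redex in a factor would be a redex in the whole, and conclude $y = \vartheta_G(A)$ from $y \leadsto_G \vartheta_G(A)$. Where you genuinely diverge is in excluding words with primed letters. The paper recycles the usefulness context: a $z \in \lang(G_w', A)$ containing a primed letter would give $uzv \in \lang(G_w')$ with strictly more than $|w'|$ primed letters, contradicting $\lang(G_w') \subseteq Rw'R$. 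You instead invoke the congruence-class refinement of Lemma~\ref{lemma:cc-transformations}(iii) --- a part of that lemma the paper's proof does not need --- so that every $p \in \lang(G_w', A)$ satisfies $p \equiv_{Rw'R} y$, whence $pw' \in Rw'R$ because $yw' \in Rw'R$, and the count of primed letters forces $p \in \alphabet_0^*$; your first half then applies to $p$. Both exclusions are the same counting argument in different clothing: the paper's version needs only the inclusion $\lang(G_w') \subseteq Rw'R$ together with one fixed context, while yours is independent of the choice of context but leans on the stronger per-nonterminal guarantee of the transformation lemma. Either way the lemma follows, and your explicit step of re-running the first paragraph on $p$ is exactly what the paper leaves implicit when it asserts that its two sub-claims suffice.
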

\begin{proof}
Suppose that $y \in \lang(G_w', A) \cap \alphabet_0^*$.
First note that we can (without loss of generality) find $u, v \in \alphabet^*$ such that $u \lang(G_w', A) v \subseteq \lang(G_w') \subseteq Rw'R$.
Consequently, there exist $p, q \in R$ such that $uyv = pw'q$.
Since $w' \neq \epsilon$, this means that $y$ is a substring of $p$ or $q$, and thus $y \in R$.
For the remainder, it suffices to show that $y = \vartheta_G(A)$, and $\lang(G_w', A) \setminus \alphabet_0^* = \emptyset$.

First, note that $y \in \lang(G', A)$, and so $h(y) = y \in \lang(G, A)$; thus, $y \leadsto_G \vartheta_G(A)$.
Since $y \in \irr_G$, we have $y = \vartheta_G(A)$.
Also, suppose towards a contradiction that $z \in \lang(G_w', A) \setminus \alphabet_0^*$.
Then $z$ contains at least one primed letter.
By choice of $u$ and $v$, we find that $uzv \in \lang(G_w')$.
Now $uzv$ contains strictly more primed letters than $uyv$; since all words in $\lang(G_w')$ contain exactly $|w'|$ primed letters, we have reached a contradiction.
We conclude that $\lang(G_w', A) \setminus \alphabet_0^* = \emptyset$.
\end{proof}

Since $G_w$ is the weakly $\omega$-reduced version of $G_w'$, we can show the following:

\begin{lemma}%
\label{lemma:reduction-pattern}
Let $A \production \alpha \in P_w$ with $\lang(G_w, \alpha) \neq \emptyset$. Then $\vartheta_{G_w}(A)$ and $\vartheta_{G_w}(\alpha)$ either contain or share an overlap with $w'$; more formally, one of the following holds:
\begin{enumerate}[(i)]
    \setlength{\itemsep}{0em}
    \item
    $\vartheta_{G_w}(A) = x_{A}w_\lefts'$ and $\vartheta_{G_w}(\alpha) = x_\alpha{}w_\lefts'$, for $x_A, x_\alpha \in \alphabet_0^*$ and $w_\lefts'$ a nonempty prefix of $w'$
    \item
    $\vartheta_{G_w}(A) = w_\rights'y_A$ and $\vartheta_{G_w}(\alpha) = w_\rights'y_\alpha$, for $y_A, y_\alpha \in \alphabet_0^*$ and $w_\rights'$ a nonempty suffix of $w'$
    \item
    $\vartheta_{G_w}(A) = x_{A}w'y_{A}$ and $\vartheta_{G_w}(\alpha) = x_\alpha{}w'y_\alpha$, for $x_A, y_A, x_\alpha, y_\alpha \in \alphabet_0^*$.
\end{enumerate}
\end{lemma}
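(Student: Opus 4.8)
The plan is to exploit two facts about $G_w$: that each nonterminal's language sits inside a single syntactic-congruence class of the intersecting regular language, and that every word of $L(G_w)$ is a factor-carrier of the primed block $w'$. First I would reduce the statement to a claim about a single congruence class. Since $A \production \alpha \in P_w$ with $L(G_w, \alpha) \neq \emptyset$, we have $L(G_w, \alpha) \subseteq L(G_w, A)$, so both $\vartheta_{G_w}(A)$ and $\vartheta_{G_w}(\alpha)$ lie in $L(G_w, A)$. By Lemma~\ref{lemma:cc-transformations}(iii) (applied with regular language $Rw'R$, where $R = \irr_G \cap \alphabet_0^*$), $L(G_w, A) = L(G_w', A) = L(G', A') \cap \cc{x}{\equiv_{Rw'R}}$ for some $A'$ and $x$; hence both minima are $\equiv_{Rw'R}$-equivalent. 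Moreover, every $z \in L(G_w, A)$ is a factor of some $uw'v \in L(G_w)$, and since the primed letters of such a word form the single contiguous block $w'$, each $z$ decomposes as $x_0 s y_0$ with $x_0, y_0 \in \alphabet_0^*$ and $s$ a contiguous factor of $w'$.

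Next I would pin down the primed content. As in the proof of Lemma~\ref{lemma:unprimed-word-finite-language}, usefulness of $A$ lets us fix $u, v$ with $u\,L(G_w, A)\,v \subseteq L(G_w) \subseteq Rw'R$. For any $z \in L(G_w, A)$, writing $uzv = r_1 w' r_2$ shows that the primed block $s$ of $z$ occupies a window inside $w'$ whose endpoints are determined solely by the number of primed letters in $u$ and in $v$ (a fixed context for the whole class), and is therefore independent of $z$. Thus both $\vartheta_{G_w}(A)$ and $\vartheta_{G_w}(\alpha)$ carry \emph{the same} factor $s$ of $w'$. The same alignment also dictates that an unprimed prefix $x_0$ can be nonempty only when $s$ begins $w'$, and an unprimed suffix $y_0$ only when $s$ ends $w'$.

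It then remains to exclude the degenerate windows. If $s = \epsilon$, every word of $L(G_w, A)$ is unprimed; being a factor of an $R$-word it is irreducible, and being in $L(G', A')$ it lies (unprimed) in $L(G, A')$, hence reduces to and equals $\vartheta_G(A')$, so $L(G_w, A)$ is a singleton. If $s$ is strictly internal to $w'$ (primed letters on both sides), the alignment constraints force $x_0 = y_0 = \epsilon$, so again $L(G_w, A) = \{s\}$ is a singleton. In either case $L(G_w, A)$ is finite, so weak $\omega$-reducedness gives $A \in I_w$; but then $L(G_w, A) \subseteq L(G_w)$, whose words all contain the full $w'$, contradicting $s \neq w'$ (using $w' \neq \epsilon$). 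Hence $s$ is a nonempty prefix, a nonempty suffix, or all of $w'$, and the shared window forces both minima into the corresponding single case of (i)/(ii)/(iii) with the common primed part $s$.

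The hard part will be the bookkeeping of the second and third paragraphs: making precise that the primed block of any factor sits in a context-determined window of $w'$, and that the empty and strictly-internal windows collapse the language to a singleton. Periodicity of $w'$ complicates the assertion that $\equiv_{Rw'R}$-equivalent words share both the string $s$ \emph{and} its alignment; I would neutralise this by fixing a single context valid for the entire congruence class and reading off the window from that context, so that the alignment type is an invariant of the class rather than of an individual occurrence.
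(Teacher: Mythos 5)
Your proposal is correct, but it is organised around a genuinely different decomposition than the paper's proof. The paper splits on whether $\lang(G_w, A)$ is finite or infinite: if finite, weak $\omega$-reducedness puts $A \in I_w$, so both minima lie in $\lang(G_w) \subseteq \irr_G w' \irr_G$ and case (iii) holds; if infinite, it first shows that $\vartheta_{G_w}(A)$ must contain or overlap $w'$ (an unprimed minimum would make $\lang(G_w', A) = \lang(G_w, A)$ finite by Lemma~\ref{lemma:unprimed-word-finite-language}), and then \emph{transfers} the resulting shape to $\vartheta_{G_w}(\alpha)$ by reusing the context $x, y$ with $x\lang(G_w, A)y \subseteq \lang(G_w)$. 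You instead establish a single invariant for the whole set $\lang(G_w, A)$: with one context $u, v$ fixed, contiguity of the primed block forces every $z \in \lang(G_w, A)$ to carry the \emph{same} window of $w'$, cut out by the primed-letter counts of $u$ and $v$, with unprimed material allowed only on a side where that window meets an end of $w'$; both minima then fall into the same case of (i)--(iii) simultaneously, with no transfer step, and the finite/infinite distinction is demoted to excluding the two degenerate windows. This buys two things: symmetry (you never privilege $\vartheta_{G_w}(A)$ over $\vartheta_{G_w}(\alpha)$), and an explicit treatment of the window strictly interior to $w'$, which the paper's proof glosses over --- its step ``neither contains nor overlaps $w'$, hence lies in $\alphabet_0^*$'' is not a purely combinatorial implication under the paper's definition of overlap (an interior factor of $w'$ is a counterexample), and is rescued exactly by the singleton argument you spell out. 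What the paper's organisation buys is brevity, since it can cite Lemma~\ref{lemma:unprimed-word-finite-language} as a black box where you re-derive its content for non-initial nonterminals. Two caveats you share with the paper but should state explicitly: the context $u, v$ exists only because $A$ may be assumed useful (the paper's ``without loss of generality''), and when $w' = \epsilon$ your exclusion step yields no contradiction but none is needed, since then the window equals $w'$ and case (iii) holds outright.
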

\begin{proof}
If $\lang(G_w, A)$ is finite, then $A \in I_w$ (since $G_w$ is weakly $\omega$-reduced), and therefore $\vartheta_{G_w}(A), \vartheta_{G_w}(\alpha) \in \lang(G_w) \subseteq \irr_{G}w'\irr_{G}$; thus, $\vartheta_{G_w}(A)$ and $\vartheta_{G_w}(\alpha)$ satisfy the third condition.

Otherwise, suppose that $\lang(G_w, A)$ is infinite.
First, note that there exist $x, y \in \alphabet^*$ such that $x\lang(G_w, A)y \subseteq \lang(G_w)$.
Thus, there exist $u, v \in \irr_G \subseteq \alphabet_0^*$ such that $x\vartheta_{G_w}(A)y = uw'v$.
Suppose, towards a contradiction, that $\vartheta_{G_w}(A)$ neither contains nor overlaps with $w'$.
In that case, $\vartheta_{G_w}(A) \in \alphabet_0^*$, and $w' \neq \epsilon$; then, since $A \deriv_{G_w}^* \vartheta_{G_w}(A)$, also $A \deriv_{G_w'}^* \vartheta_{G_w}(A)$.
By Lemma~\ref{lemma:unprimed-word-finite-language}, we have that $\lang(G_w', A)$ is finite.
But since $\lang(G_w', A) = \lang(G_w, A)$ and the latter is infinite, we have a contradiction.
Therefore $\vartheta_{G_w}(A)$ must contain or overlap with $w'$.

Suppose $\vartheta_{G_w}(A) = x_{A}w_\lefts'$ for $x_A \in \alphabet_0^*$ and $w_\lefts'$ a nonempty prefix of $w'$; other cases are similar.
Write $w' = w_\lefts'w_\rights'$ and $y = w_\rights'v$.
By choice of $x$ and $y$, we have $x\vartheta_{G_w}(\alpha)w_\rights'v = x\vartheta_{G_w}(\alpha)y \in \lang(G_w) \subseteq \irr_{G}w'\irr_{G}$.
Therefore, $\vartheta_{G_w}(\alpha) = x_\alpha{}w_\lefts'$ for some $x_\alpha \in \alphabet_0^*$.
\end{proof}

This lemma tells us something about $\leadsto_{G_w}$: all of its generating rules overlap with $w'$, and moreover each rule preserves $w'$.
Thus, to decide whether $uw'v \in \lang(G_w)$, we can apply the rules of $\leadsto_{G_w}$ as described above; since every step involves (and preserves) part of $w'$, we also know that reductions must be clustered around the locus of $w'$.

\paragraph{Step~\eqref{step:create-dpda}: creating a DPDA}
The above analysis allows us to construct a DPDA that accepts $\{ u\sharp{}v : uw'v \in \lang(G_w) \}$, by going through the following phases:
\begin{enumerate}
    \item
    Read symbols and push them on the stack, until we encounter $\sharp$.

    \item
    From that point on, read from the stack or the input and apply reductions whenever possible, but with $\sharp$ standing in for the part of $w'$.

    \item
    When no reductions are possible (i.e., we have reached an element if $\irr_{G_w}$), check whether the buffer corresponds to a $\vartheta_{G_w}(A)$ for some $A \in I_w$.
\end{enumerate}
In the second step, the state of the DPDA holds a buffer to the left and the right of $\sharp$, large enough to detect any possible reductions.
Since $\leadsto_{G_w}$ is Noetherian, this phase must end after finitely many reductions; furthermore, since $\leadsto_{G_w}$ is length-decreasing, we can choose the size of the buffer appropriately.
Formally, this DPDA is defined as follows:

\begin{definition}
We build the PDA $M_w = \angl{Q, \rightarrow, q_0, F}$ as follows.
First, let $N$ be the maximum length of $\vartheta_{G_w}(\alpha)$ for $A \production \alpha$ in $G_w$.
Also, $Q$ and $F$ are the smallest sets satisfying
\begin{mathpar}
\inferrule{~}{%
    q_0 \in Q
}
\and
\inferrule{%
    u, v \in \alphabet_0^* \\
    |u|, |v| \leq N
}{%
    u\sharp{}v \in Q
}
\and
\inferrule{%
    A \in I_w \\
    \vartheta_{G_w}(A) = uw'v
}{%
    u\sharp{}v \in F
}
\end{mathpar}
Furthermore, $\rightarrow$ is the smallest transition relation satisfying
\begin{mathpar}
\inferrule{%
    a \neq \sharp
}{%
    q_0 \xrightarrow{b,\, a/ba} q_0
}
\and
\inferrule{~}{%
    q_0 \xrightarrow{\sharp,\, a/a} \sharp
}
\and
\inferrule{%
    u\sharp{}v \in Q \\
    |u| < N \\
    uw'v \in \irr_{G_w} \\
    a \neq \$
}{%
    u\sharp{}v \xrightarrow{\epsilon, a/\epsilon} au\sharp{}v
}
\and
\inferrule{%
    u\sharp{}v \in Q \\
    |v| < N \\
    uw'v \in \irr_{G_w} \\
    a = \$ \vee |u| = N
}{%
    u\sharp{}v \xrightarrow{b, a/a} u\sharp{}vb
}
\and
\inferrule{%
    u\sharp{}v \in Q \\
    uw'v \not\in \irr_{G_w} \\
    uw'v \leadsto_{G_w} xw'y\ \mbox{such that $xy$ is $\preceq$-minimal}
}{%
    u\sharp{}v \xrightarrow{\epsilon, a/a} x\sharp{}y
}
\end{mathpar}
\end{definition}

The first two rules take care of the first phase, where input is read onto the stack until we reach $\sharp$.
The third and fourth rule are responsible for reading symbols from the stack and from the input buffer respectively; the last rule applies reductions.
The set of accepting states makes sure that, upon acceptance, the buffer represents $\vartheta_{G_w}(A)$ for an $A \in I_w$.

We note that $M_w$ is deterministic: if $M_w$ is in state $q_0$, then the input is either equal to $\sharp$ (in which case the first rule applies) or not (in which case the second rule applies); otherwise, we are in some state $u\sharp{}v$, then either $uw'v \not\in \irr_{G_w}$ (and so the last rule applies), or the (mutually exclusive) third or fourth rule apply.

We can then show that $M_w$ indeed accepts $\{ u\sharp{}v : uw'v \in \lang(G_w) \}$.
We give a sketch of the proof below; details are in Appendix~\ref{appendix:constructed-dpda-language}.

\begin{restatable}{lemma}{dpdaacceptscontexts}%
\label{lemma:dpda-accepts-contexts}
$\lang(M_w) = \{ u\sharp{}v : uw'v \in \lang(G_w) \}$.
\end{restatable}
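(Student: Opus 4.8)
The plan is to prove the two inclusions $\lang(M_w) \subseteq \{ u\sharp{}v : uw'v \in \lang(G_w) \}$ and its reverse separately, in both cases by tracking the invariant that the content of the stack, the state, and the remaining input jointly encode a word that is $\equiv_{\lang(G_w)}$-related (indeed $\leadsto_{G_w}$-related) to the original input read so far. The central bookkeeping device is a \emph{decoding} function that, from a configuration $\angl{u\sharp{}v, z, \tau}$ reached after the first phase, reconstructs a word of the form $\rho\, u\, w'\, v\, z$, where $\rho$ is the reversal of the stack contents $\tau$ (with the end-marker $\$$ stripped). I would first establish that every transition of $M_w$ preserves the $\leadsto_{G_w}$-equivalence class of this decoded word: the third and fourth rules merely shuttle a symbol across the $\sharp$-window without changing the underlying word, and the last (reduction) rule replaces $uw'v$ by $xw'y$ where $uw'v \leadsto_{G_w} xw'y$, so by Lemma~\ref{lemma:reduction-pattern} the decoded word strictly decreases under $\leadsto_{G_w}$ while staying in the same class.

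For the inclusion $\lang(M_w) \subseteq \{ u\sharp{}v : uw'v \in \lang(G_w) \}$, suppose $u_0\sharp{}v_0 \in \lang(M_w)$; by definition of acceptance $M_w$ reaches a configuration $\angl{q, \epsilon, \$}$ with $q \in F$, so $q = u\sharp{}v$ where $uw'v = \vartheta_{G_w}(A)$ for some $A \in I_w$. The decoding invariant, applied to the accepting run, shows that the original decoded word $u_0 w' v_0$ satisfies $u_0 w' v_0 \leadsto_{G_w}^* uw'v = \vartheta_{G_w}(A)$, and then Lemma~\ref{lemma:derivations-vs-reductions} yields $u_0 w' v_0 \in \lang(G_w)$, as required. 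For the reverse inclusion, I would start from $u_0 w' v_0 \in \lang(G_w)$ and argue that the deterministic run of $M_w$ on input $u_0\sharp{}v_0$ cannot get stuck before emptying the input and must terminate in an accepting configuration. Here I lean on the fact established in Step~\eqref{step:confluent} and Lemma~\ref{lemma:reduction-pattern}: applying reductions by any strategy (in particular the $\preceq$-minimal one hard-wired into the last transition rule) reduces $u_0 w' v_0$ to an irreducible word which, because $u_0 w' v_0 \in \lang(G_w)$, must be $\vartheta_{G_w}(A)$ for some $A \in I_w$, landing us in $F$.

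The main obstacle, and the step deserving the most care, is verifying that the fixed buffer size $N$ never overflows and that the read-rules (third and fourth) genuinely expose every reduction that Lemma~\ref{lemma:reduction-pattern} guarantees to be clustered around $w'$. Concretely, I must show that whenever $uw'v \in \irr_{G_w}$ but a reduction becomes \emph{possible} only after importing more of the surrounding word, the read-rules do import enough symbols: since every generating rule of $\leadsto_{G_w}$ has a left-hand side of the form $\vartheta_{G_w}(\alpha)$, whose length is at most $N$, and since by Lemma~\ref{lemma:reduction-pattern} that left-hand side must overlap $w'$, a window of $N$ symbols of $\alpha_0^*$ on each side of $\sharp$ suffices to detect any applicable reduction. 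I would also need to check the determinism-driven claim that once $uw'v \in \irr_{G_w}$ the machine keeps reading (never spuriously reducing) until either the stack is exhausted or a reduction is unlocked, so that the run faithfully simulates the any-strategy reduction of $u_0 w' v_0$. The remaining arguments — that $\rho$ correctly mirrors the stack, that the $\sharp$ symbol is a faithful placeholder for $w'$ throughout, and that the two phases compose — are routine inductions on the length of the run once the buffer-adequacy lemma is in hand.
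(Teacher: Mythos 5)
Your proposal is correct and takes essentially the same route as the paper: your ``decoding'' invariant is precisely the paper's Lemma~\ref{lemma:pda-simulates-reduction}, the forward inclusion is the same appeal to that invariant plus Lemma~\ref{lemma:derivations-vs-reductions}, and your reverse inclusion (unique halting configuration of the deterministic run, with the $N$-sized window justified by the clustering of reductions around $w'$ from Lemma~\ref{lemma:reduction-pattern}) is the paper's appendix argument. The only point the paper makes explicit that you leave inside ``buffer adequacy'' is the final check that the leftover stack and input are empty at the halting configuration, which the paper gets from $|\vartheta_{G_w}(A)| \leq N$ together with its observations (i) and (ii) about that configuration.
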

\begin{proofsketch}
For the inclusion from left to right, show that every change in configuration of $M_w$ corresponds to a step in the reduction of the input according to $\leadsto_{G_w}$, and that a configuration where $M_w$ accepts corresponds to this reduction reaching $\vartheta_{G_w}(A)$ for $A \in I_w$.

For the other inclusion, first note that if $u\sharp{}v$ is such that $uw'v \in \lang(G_w)$, we can let $M_w$ read up to and including $\sharp$, putting $u$ on the stack.
Subsequently, inspect the halting configuration reached by $M_w$ from that point on (which exists uniquely, for $\progressp_{M_w}$ is Noetherian), and show that it is a state where $M_w$ can accept --- i.e., that the remaining input and stack is empty, and that the buffer corresponds to an accepting state of $M_w$.
\end{proofsketch}

\paragraph{Step~\eqref{step:wrapping-up}: wrapping up}

Now we can show the following.
\begin{lemma}
$\lang(M_w) = \lang(M_x)$ if and only if $w \equiv_{\lang(G)} x$.
\end{lemma}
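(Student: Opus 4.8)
The plan is to first unfold exactly what $\lang(M_w)$ recognises in terms of $\lang(G)$, and then bridge the gap between equality of these restricted context-languages and equality under the full syntactic congruence by means of the reduction $\leadsto_G$. Setting $R = \irr_G \cap \alphabet_0^*$, recall from the construction that $\lang(G_w) = \lang(G_w') = \lang(G') \cap Rw'R$ with $\lang(G') = h^{-1}(\lang(G))$. Since $w'$ is the unique block of primed letters and the state components $u,v$ range over $\alphabet_0^*$, the membership $uw'v \in \lang(G_w)$ forces the factorisation $u,v \in R$ and is equivalent to $h(uw'v) = uwv \in \lang(G)$. Combined with Lemma~\ref{lemma:dpda-accepts-contexts}, this yields
\[
    \lang(M_w) = \{\, u\sharp{}v : u, v \in R,\ uwv \in \lang(G) \,\},
\]
that is, $\lang(M_w)$ is precisely the context-language $\lang(G)[w]$ with its contexts restricted to $R$. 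The easy direction is then immediate: if $w \equiv_{\lang(G)} x$, then $\lang(G)[w] = \lang(G)[x]$, and restricting the admissible contexts to those drawn from $R$ gives $\lang(M_w) = \lang(M_x)$.

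The substance lies in the converse. The two facts I would lean on are that, because $G$ is CC, $\leadsto_G\ \subseteq\ \equiv_{\lang(G)}$ (this is exactly the inclusion ${\sim} \subseteq {\equiv_{\lang(G)}}$ exploited in the proof of Lemma~\ref{lemma:clark-congruential-implies-congruential}), and that $\lang(G)$ is a union of $\equiv_{\lang(G)}$-classes (a general property of any syntactic congruence, via the context $u=v=\epsilon$). Assuming $\lang(M_w) = \lang(M_x)$, I would take arbitrary $u,v \in \alphabet_0^*$ with $uwv \in \lang(G)$ and aim for $uxv \in \lang(G)$. Reducing $u$ and $v$ to $\leadsto_G$-irreducible normal forms $\bar u,\bar v$ keeps us inside $\alphabet_0^*$ (every rule of $\leadsto_G$ rewrites an $\alphabet_0^*$-factor to an $\alphabet_0^*$-string), so $\bar u,\bar v \in R$; and since $\leadsto_G$ is contained in the congruence $\equiv_{\lang(G)}$, we get $\bar u w \bar v \equiv_{\lang(G)} uwv$, hence $\bar u w \bar v \in \lang(G)$ because $\lang(G)$ is closed under $\equiv_{\lang(G)}$.

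Now $\bar u \sharp \bar v \in \lang(M_w) = \lang(M_x)$, so $\bar u x \bar v \in \lang(G)$, and running the same congruence backwards gives $uxv \equiv_{\lang(G)} \bar u x \bar v \in \lang(G)$, whence $uxv \in \lang(G)$. The argument is symmetric in $w$ and $x$, so I also obtain the reverse implication, establishing $uwv \in \lang(G) \iff uxv \in \lang(G)$ for all $u,v$, i.e.\ $w \equiv_{\lang(G)} x$. I expect the one genuinely delicate point to be this passage from the restricted contexts that $M_w$ can see to arbitrary contexts: the whole argument hinges on the reduction $\leadsto_G$ letting us replace an arbitrary context by a $\preceq$-irreducible one in the \emph{same} congruence class, which is precisely where Clark-congruentiality (through $\leadsto_G \subseteq \equiv_{\lang(G)}$) is indispensable. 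Everything else is bookkeeping.
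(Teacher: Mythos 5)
Your proof is correct and takes essentially the same route as the paper's: the paper likewise rests on the characterization $\lang(M_w) = \{u\sharp v : u, v \in \irr_G,\ uwv \in \lang(G)\}$ (stated at the end of Step~(II) together with Lemma~\ref{lemma:dpda-accepts-contexts}), reduces an arbitrary context $u, v$ to irreducible words via $\leadsto_G$, and uses ${\leadsto_G} \subseteq {\equiv_{\lang(G)}}$ (Clark-congruentiality) to transfer membership between $uwv$, $\bar{u}w\bar{v}$, $\bar{u}x\bar{v}$ and $uxv$, exactly as you do. The only difference is presentational: you re-derive the characterization of $\lang(M_w)$ from the grammar constructions, whereas the paper invokes it directly.
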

\begin{proof}
For the direction from left to right, suppose that $\lang(M_w) = \lang(M_x)$, and that $uwv \in \lang(G)$.
We can then find $u', v' \in \irr_G$ such that $u \leadsto_G u'$ and $v \leadsto_G v'$.
Now, since $G$ is CC and $u \equiv_{\lang(G)} u'$ and $v \equiv_{\lang(G)} v'$, we know that $u'wv' \in \lang(G)$.
Consequently, $u'\sharp{}v' \in \lang(M_w) = \lang(M_x)$, and therefore $u'xv' \in \lang(G)$, meaning that $uxv \in \lang(G)$.
By symmetry, $uxv \in \lang(G)$ also implies $uwv \in \lang(G)$; this allows us to conclude that $w \equiv_{\lang(G)} x$.

For the other direction, suppose that $y \in \lang(M_w)$.
Then $y = u\sharp{}v$ such that $u, v \in \irr_G$, and $uwv \in \lang(G)$.
Since $w \equiv_{\lang(G)} x$, it then follows that $uxv \in \lang(G)$, and thus $y = u\sharp{}v \in \lang(M_x)$.
This shows that $\lang(M_w) \subseteq \lang(M_x)$; the other inclusion follows symmetrically.
\end{proof}

The above characterises the syntactic congruence of $\lang(G)$ in terms of the equivalence of two DPDAs, constructible from $G$, $w$ and $x$.
Since equivalence of DPDAs is decidable~\cite{senizergues-2001}, it follows that we can decide $\equiv_{\lang(G)}$.
The main result then follows.
\begin{theorem}
It is decidable, given a CFG $G$ that is CC and $w, x \in \alphabet^*$, whether $w \equiv_{\lang(G)} x$.
It is furthermore decidable, given CFGs $G$ and $G'$ that are CC, whether $\lang(G) = \lang(G')$.
\end{theorem}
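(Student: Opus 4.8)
The plan is to assemble results already in hand; by this point both claims follow from the machinery developed above together with Sénizergues' decidability theorem for DPDA equivalence.

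For the first claim, given a CC grammar $G$ and words $w, x \in \alphabet^*$, I would first carry out the construction of Step~\eqref{step:transform} to obtain the grammars $G_w$ and $G_x$, and then build the DPDAs $M_w$ and $M_x$ of Step~\eqref{step:create-dpda}; both constructions are effective, since the quantities they depend on (the bound $N$, the values $\vartheta_{G_w}(A)$, and membership in $\irr_{G_w}$) are computable. The lemma immediately preceding this theorem shows that $\lang(M_w) = \lang(M_x)$ if and only if $w \equiv_{\lang(G)} x$. As equivalence of DPDAs is decidable~\cite{senizergues-2001}, deciding $\lang(M_w) = \lang(M_x)$ decides $w \equiv_{\lang(G)} x$.

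For the second claim, the strategy is to verify the four hypotheses of Lemma~\ref{lemma:congruential-decidable} for $L_1 = \lang(G)$ and $L_2 = \lang(G')$. First, Lemma~\ref{lemma:clark-congruential-implies-congruential} (and its proof) exhibits each $L_i$ as a finite union $\bigcup_{t \in T_i} \cc{t}{\sim_i}$ of classes of a finitely generated congruence, putting the languages in the form that lemma requires; the generators of each $\sim_i$ (the rules of $\leadsto_G$, resp.\ $\leadsto_{G'}$) and the finite representative sets $T_i = \{ \vartheta_G(A) : A \in I \}$ are effectively computable. Second, membership in each $L_i$ is decidable by Lemma~\ref{lemma:derivations-vs-reductions}: one reduces a candidate word by the Noetherian relation $\leadsto_G$ to an irreducible $u_r$ and checks whether $u_r = \vartheta_G(A)$ for some $A \in I$. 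Third, decidability of $\equiv_{L_1}$ and $\equiv_{L_2}$ is precisely the first claim of the theorem, applied to $G$ and $G'$. With all four preconditions met, Lemma~\ref{lemma:congruential-decidable} yields decidability of $\lang(G) = \lang(G')$.

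The main obstacle does not lie in this final assembly: essentially all of the difficulty has already been discharged in the preceding development --- the DPDA construction of Step~\eqref{step:create-dpda}, the correctness argument of Lemma~\ref{lemma:dpda-accepts-contexts}, and the appeal to the decidability of DPDA equivalence. The only point needing a little care here is effectiveness: I must ensure that the congruence generators and representative sets promised by Lemma~\ref{lemma:clark-congruential-implies-congruential} are genuinely computable from $G$ and $G'$, so that Lemma~\ref{lemma:congruential-decidable} can be invoked as an algorithm rather than merely as an implication.
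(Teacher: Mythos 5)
Your proposal is correct and takes essentially the same route as the paper: the paper likewise obtains the first claim by combining the lemma characterising $w \equiv_{\lang(G)} x$ as $\lang(M_w) = \lang(M_x)$ with S\'{e}nizergues' decidability of DPDA equivalence, and obtains the second by verifying the preconditions of Lemma~\ref{lemma:congruential-decidable} through Lemma~\ref{lemma:clark-congruential-implies-congruential} (congruentiality, with generators $\leadsto_G$ and representatives $\vartheta_G(A)$ for $A \in I$ with nonempty language), decidable membership, and the just-established decidability of the syntactic congruences. Your explicit attention to effectiveness --- that $N$, $\vartheta_G$, $\irr_{G_w}$, the generators and the representative sets are all computable from the grammars --- is left implicit in the paper but is correct and easily discharged.
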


Like in~\cite{autebert-boasson-1992}, $M_w$ is \emph{one-turn}, i.e., it processes input first in a phase where the stack does not shrink (when it is still in $q^0$), and subsequently in a phase where the stack does not grow (in all other states).
Thus, an algorithm to test equivalence of finite-turn DPDAs~\cite{valiant-1974,beeri-1975} suffices.
Complexity-wise, this also helps: the equivalence problem for one-turn DPDAs is known to be in \textsc{co-np}~\cite{senizergues-2003}, while the problem for general DPDAs is known only to be primitive recursive~\cite{stirling-2002}.

\subsection{Recognition}

The \emph{recognition problem} for a class of CFGs $\mathcal{C}$ asks, given a CFG $G$, whether $G$ is in $\mathcal{C}$.
This problem is decidable for NTS grammars~\cite{senizergues-1985}, yet undecidable for a proper subclass of pre-NTS grammars~\cite{zhang-1992}.%
\footnote{%
    We note that the class of CFGs considered in~\cite{zhang-1992} was originally claimed to coincide with pre-NTS grammars~\cite{boasson-senizergues-1985}, but this is not strictly true: Zhang's class is a strict subclass of the pre-NTS grammars, although the languages that they can express are the same.
}

Given that our earlier decidability proofs were based on proofs of the same statement for pre-NTS grammars, one might ask whether we could extend the result from~\cite{zhang-1992} to CC grammars.
This turns out not to be the case.
The proof in op.\ cit.\ constructs, given a Turing machine $M$ and an input $w$, a CFG which is in the studied class if and only if $M$ does not halt on input $w$; this construction relies heavily on adding nonterminals with an empty language.
However, we can easily adapt the first construction from Lemma~\ref{lemma:cc-transformations} to show that we can remove all such nonterminals from a CFG $G$ to obtain an (equivalent) CFG $G'$; furthermore, $G$ is CC if and only if $G'$ is CC\@.
Thus, to decide whether a given CFG is CC, we can assume without loss of generality that no nonterminal has an empty language.
Hence, the undecidability proof from~\cite{zhang-1992} does not generalize to CC grammars.

We therefore turn our attention to finding a novel approach to the recognition problem for CC grammars, independent of (un){}decidability proofs of the recognition problem for its subclasses.
To this end, it is useful to introduce the following notion.
\begin{definition}
Let $\sim$ be a congruence.
$G$ is \emph{$\sim$-aligned} if, for every $A \in V$, there exists a $w_A \in \alphabet^*$ such that $\lang(G, A) \subseteq \cc{w_A}{\sim}$.
\end{definition}

Note that, by definition, $G$ is CC if and only if it is $\equiv_{\lang(G)}$-aligned.
As it turns out, $\sim$-alignment is decidable, provided that $\sim$ is decidable.
\begin{lemma}%
\label{lemma:decide-alignment}
Given a decidable congruence $\sim$, it is decidable whether a CFG $G$ is $\sim$-aligned.
\end{lemma}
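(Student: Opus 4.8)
The plan is to reduce $\sim$-alignment, which a priori quantifies over the (possibly infinite) languages $\lang(G, A)$, to a finite collection of $\sim$-tests on canonical representatives. The key observation is that whenever $\lang(G, A) \neq \emptyset$, the $\preceq$-minimum $\vartheta_G(A)$ is a \emph{computable} element of $\lang(G, A)$, and---because $\sim$ is a congruence---the $\sim$-class of any word derived from the right-hand side of a production is determined by the $\sim$-classes of the factors from which it is built. This suggests that $\vartheta_G(A)$ should serve as the canonical witness for $A$.

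Concretely, I would prove the following characterisation: $G$ is $\sim$-aligned if and only if, for every production $A \production \alpha$ with $\lang(G, \alpha) \neq \emptyset$, it holds that $\vartheta_G(A) \sim \vartheta_G(\alpha)$. For the forward direction, note that $\lang(G, \alpha) \subseteq \lang(G, A)$, so $\vartheta_G(A)$ and $\vartheta_G(\alpha)$ both lie in $\lang(G, A)$; if $G$ is $\sim$-aligned this language is contained in a single $\sim$-class, forcing $\vartheta_G(A) \sim \vartheta_G(\alpha)$. For the converse, I would argue by induction on the length of a derivation $A \deriv_G^* w$ that every $w \in \lang(G, A)$ satisfies $w \sim \vartheta_G(A)$. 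Writing the first step of the derivation as $A \production \alpha$ with $\alpha = X_1 \cdots X_n$ and $w = w_1 \cdots w_n$, each factor $w_i$ is either a terminal (so $w_i = \vartheta_G(X_i)$) or is derived from a nonterminal by a strictly shorter derivation (so $w_i \sim \vartheta_G(X_i)$ by the induction hypothesis); since $\sim$ is a congruence and $\vartheta_G(\alpha) = \vartheta_G(X_1) \cdots \vartheta_G(X_n)$, we obtain $w \sim \vartheta_G(\alpha) \sim \vartheta_G(A)$. This shows $\lang(G, A) \subseteq \cc{\vartheta_G(A)}{\sim}$, as required; nonterminals with empty language are aligned vacuously, and impose no constraints.

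Given this characterisation, decidability is immediate. The set of productions is finite, and for each production $A \production \alpha$ we can decide whether $\lang(G, \alpha) \neq \emptyset$ (equivalently, whether every nonterminal occurring in $\alpha$ is productive, a standard decidable property), we can compute $\vartheta_G(A)$ and $\vartheta_G(\alpha) = \vartheta_G(X_1) \cdots \vartheta_G(X_n)$ by enumerating words in $\preceq$-order and testing membership in the relevant CFL, and we can decide $\vartheta_G(A) \sim \vartheta_G(\alpha)$ because $\sim$ is assumed decidable. Thus $G$ is $\sim$-aligned precisely when finitely many effectively checkable $\sim$-tests all succeed.

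The main obstacle to anticipate is conceptual rather than computational: since $\sim$ is an arbitrary decidable congruence, the quotient $\alphabet^* / {\sim}$ may well be infinite, so one cannot hope to decide alignment by searching over assignments of $\sim$-classes to nonterminals. The crux is therefore to realise that the candidate class of each nonterminal is pinned down by the computable representative $\vartheta_G(A)$, and that the congruence property allows alignment to be certified locally, one production at a time. Verifying that this local, finitary condition is genuinely equivalent to the global containment of each $\lang(G, A)$ in a single class is the part of the argument that requires the inductive bookkeeping sketched above.
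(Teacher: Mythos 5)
Your proof is correct and follows essentially the same route as the paper: both reduce alignment to the finite, decidable condition that $\vartheta_G(A) \sim \vartheta_G(\alpha)$ for each production $A \production \alpha$, with the same forward argument and the same induction on derivation length for the converse. The only (cosmetic) difference is that the paper removes empty-language nonterminals by a preprocessing transformation, whereas you handle them by restricting the check to productions with $\lang(G, \alpha) \neq \emptyset$ and noting vacuous alignment.
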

\begin{proof}
Without loss of generality, assume that all nonterminals of $G$ have a non-empty language; if this is not the case, we can create a CFG $G'$ that does have this property, and which is $\sim$-aligned if and only if $G$ is.
Since $\vartheta_G: \alphabetf^* \to \alphabet^*$ is computable, it now suffices to prove that $G$ is $\sim$-aligned if and only if for all $A \production \alpha \in P$, it holds that $\vartheta_G(A) \sim \vartheta_G(\alpha)$.

For the direction from left to right, we know that if $A \production \alpha \in P$, then $\vartheta_G(A), \vartheta_G(\alpha) \in \lang(G, A) \subseteq \cc{w_A}{\sim}$ for some $w_A \in \alphabet^*$; hence, $\vartheta_G(A) \sim w_A \sim \vartheta_G(\alpha)$.
For the direction from right to left, a straightforward inductive argument shows that for all $\alpha, \beta \in \alphabetf^*$ such that $\alpha \deriv_G^* \beta$, we have that $\vartheta_G(\alpha) \sim \vartheta_G(\beta)$.
Hence, if $A \deriv_G^* w$, then we know that $\vartheta_G(A) \sim \vartheta_G(w) = w$, and thus it suffices to choose $w_A = \vartheta_G(A)$.
\end{proof}

As an application of the above, let $\sim$ be the smallest congruence on ${\{ \lp, \rp \}}^*$ such that $\lp\rp \sim \epsilon$.
Without too much effort, we can then show that we can \emph{uniquely} compute $m, n \in \naturals$ such that $w \sim \rp^m\lp^n$.
Therefore, we can conclude that $\sim$ is decidable: to decide whether $w \sim x$, check whether the $m$ and $n$ computed for $w$ are the same as the $m$ and $n$ computed for $x$.
Thus, by Lemma~\ref{lemma:decide-alignment}, we find that we can decide whether a given grammar $G$ over the alphabet ${\{ \lp, \rp \}}^*$ is $\sim$-aligned.
Indeed, $\sim$ turns out to be exactly $\equiv_{\lang(G_D)}$.

Lemma~\ref{lemma:decide-alignment} would also show that the recognition problem for CC grammars is decidable, provided that the congruence problem were decidable for arbitrary CFGs.
Unsurprisingly, this is not the case, as witnessed by the following lemma.
\begin{lemma}
It is undecidable, given a CFG $G$ and words $w, x \in \alphabet^*$, whether $w \equiv_{\lang(G)} x$.
\end{lemma}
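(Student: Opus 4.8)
The plan is to prove undecidability by reduction from the \emph{universality problem} for context-free grammars: given a CFG $G$ over an alphabet $\alphabet$, decide whether $\lang(G) = \alphabet^*$. This is a classical undecidable problem, provable by the same techniques that establish undecidability of equivalence~\cite{bar-hillel-perles-shamir-1961}. From an instance $G$ I would effectively construct a CFG $G'$ together with two words $w$ and $x$ such that $w \equiv_{\lang(G')} x$ holds if and only if $\lang(G) = \alphabet^*$; since the right-hand side is undecidable, the congruence question must be too.

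Concretely, I would introduce two fresh symbols $c, d \notin \alphabet$ and take $G'$ to be a grammar over $\alphabet \cup \{ c, d \}$ whose language is
\[
    \lang(G') = \{ c y : y \in \lang(G) \} \cup \{ d y : y \in \alphabet^* \}.
\]
This is context-free, being the union of $c \cdot \lang(G)$ (context-free, since $\lang(G)$ is) with $d \cdot \alphabet^*$ (regular), so a suitable $G'$ can be built effectively from $G$. I would then set $w = c$ and $x = d$ and claim that $c \equiv_{\lang(G')} d$ exactly captures universality of $G$.

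To verify the claim I would unfold the definition of $\equiv_{\lang(G')}$ and analyse the contexts $u, v$. The crucial structural fact is that every word of $\lang(G')$ contains exactly one symbol from $\{ c, d \}$, occurring at its very front. Hence $ucv \in \lang(G')$ forces $u = \epsilon$ and $v \in \alphabet^*$, in which case it amounts to $v \in \lang(G)$; likewise $udv \in \lang(G')$ forces $u = \epsilon$ and $v \in \alphabet^*$, and then holds unconditionally. All other contexts push both $ucv$ and $udv$ outside $\lang(G')$ and therefore cannot separate $c$ from $d$. Consequently $c \equiv_{\lang(G')} d$ collapses to the single requirement that $v \in \lang(G) \iff v \in \alphabet^*$ for every $v \in \alphabet^*$, i.e.\ that $\lang(G) = \alphabet^*$.

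The construction is routine, and the only point demanding care is this case analysis on the context $\langle u, v\rangle$: I must confirm that contexts mentioning the fresh symbols — in particular those where $u$ itself begins with $c$ or $d$, producing a word with two symbols from $\{ c, d \}$ — always land outside $\lang(G')$ and so never distinguish $c$ from $d$. Once this bookkeeping is discharged, the equivalence between the congruence query and universality is immediate, and undecidability of $w \equiv_{\lang(G)} x$ follows.
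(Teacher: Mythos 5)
Your proof is correct, but it takes a genuinely different route from the paper's. Both arguments reduce from universality of CFGs (is $\lang(G) = \alphabet^*$?), which is undecidable by~\cite{bar-hillel-perles-shamir-1961}; the difference lies in the shape of the reduction. The paper performs no grammar construction at all: it shows that $\lang(G) = \alphabet^*$ if and only if $\epsilon \in \lang(G)$ and $a \equiv_{\lang(G)} \epsilon$ for every $a \in \alphabet$ (the right-to-left direction being a short induction on word length), so a decision procedure for the congruence would decide universality via $|\alphabet|$ congruence queries plus one membership test, all on $G$ itself --- a Turing reduction. You instead give a many-one reduction: from $G$ you build a fresh grammar $G'$ with $\lang(G') = c\,\lang(G) \cup d\,\alphabet^*$ and ask a single congruence query $c \equiv_{\lang(G')} d$. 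Your case analysis is sound: every word of $\lang(G')$ contains exactly one symbol of $\{c,d\}$, at the front, so $ucv \in \lang(G')$ forces $u = \epsilon$ and $v \in \lang(G)$, while $udv \in \lang(G')$ forces $u = \epsilon$ and $v \in \alphabet^*$, and hence the query is equivalent to $\lang(G) = \alphabet^*$. What your approach buys is a one-query, induction-free reduction; what it costs is the alphabet. The paper fixes $\alphabet$ globally, and its lemma quantifies over $w, x \in \alphabet^*$ with contexts drawn from $\alphabet^*$, whereas your $G'$ lives over $\alphabet \cup \{c,d\}$, so strictly speaking you establish undecidability over the enlarged alphabet. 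This is harmless --- either note that the lemma's alphabet is arbitrary, or encode $c$ and $d$ as distinct words over two existing letters --- but it is a mismatch with the paper's framing that your write-up should acknowledge in a sentence.
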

\begin{proof}
We claim that $\lang(G) = \alphabet^*$ if and only if $\epsilon \in \lang(G)$, and for all $a \in \alphabet$ it holds that $a \equiv_{\lang(G)} \epsilon$.
First, suppose $\lang(G) = \alphabet^*$; then $\epsilon \in \lang(G)$ immediately.
Furthermore, for $a \in \alphabet$ and $u, v \in \alphabet^*$, we have that $uav, uv \in \lang(G)$, and thus $a \equiv_{\lang(G)} \epsilon$.
For the other direction, let $w \in \alphabet^*$.
An argument by induction on $|w|$ then shows that $w \equiv_{\lang(G)} \epsilon$, and hence $w \in \lang(G)$.

Since it is decidable whether $\epsilon \in \lang(G)$, the above equivalence tells us that we can decide $\lang(G) = \alphabet^*$ if we can decide the congruence problem for $G$.
Because the former is undecidable for CFGs in general~\cite{bar-hillel-perles-shamir-1961}, the claim follows.
\end{proof}

Fortunately, some classes of CFGs do have a decidable congruence problem.
This leads us to formulate our main result regarding the recognition problem, as follows.

\begin{theorem}
It is decidable, given a DCFG $G$, whether $G$ is CC\@.
\end{theorem}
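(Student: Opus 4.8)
The plan is to reduce the problem to deciding the syntactic congruence of $\lang(G)$. Recall that, by definition, $G$ is CC if and only if it is $\equiv_{\lang(G)}$-aligned, and that $\equiv_{\lang(G)}$ is a congruence for any language. By Lemma~\ref{lemma:decide-alignment}, it therefore suffices to exhibit, for a DCFG $G$, an algorithm that decides $w \equiv_{\lang(G)} x$ for given $w, x \in \alphabet^*$: feeding the resulting decidable congruence together with $G$ into that lemma then yields the desired decision procedure for Clark-congruentiality.

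To decide $\equiv_{\lang(G)}$, write $L = \lang(G)$ and recall that $w \equiv_L x$ holds exactly when $L[w] = L[x]$. I would show that each language of contexts $L[w]$ is, effectively, a DCFL\@. Let $g \colon {(\alphabet \cup \{ \sharp \})}^* \to \alphabet^*$ be the morphism fixing every $a \in \alphabet$ and sending $\sharp$ to $w$; then $g(u \sharp v) = uwv$ for $u, v \in \alphabet^*$, so that
\[
    L[w] = g^{-1}(L) \cap \alphabet^* \sharp \alphabet^* .
\]
Since $G$ is a DCFG, $L$ is accepted by some DPDA, and DCFLs are (effectively) closed under inverse morphism and under intersection with a regular language; hence $L[w]$ is an effectively constructible DCFL, and likewise $L[x]$. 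Deciding $w \equiv_L x$ thus reduces to deciding $L[w] = L[x]$, i.e.\ equivalence of two DPDAs, which is decidable by~\cite{senizergues-2001}.

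The only place where the determinism hypothesis is essential is in obtaining a DPDA for $L = \lang(G)$: the two closure constructions preserve determinism, and the final equivalence test requires deterministic machines, so the argument necessarily fails for arbitrary CFGs---consistent with the preceding lemma showing that the congruence problem is undecidable in general. The main obstacle is therefore the representation of the context language $L[w]$ as a DCFL; once the identity $L[w] = g^{-1}(L) \cap \alphabet^* \sharp \alphabet^*$ is in hand this is routine, the remaining work being to check that the standard inverse-morphism and regular-intersection constructions are effective and determinism-preserving (with a little care when $w = \epsilon$, where $g$ erases $\sharp$).
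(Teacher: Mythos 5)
Your proposal is correct and follows essentially the same route as the paper's own proof: reduce recognition to deciding $\equiv_{\lang(G)}$ via Lemma~\ref{lemma:decide-alignment}, express the context language as $L[w] = g_w^{-1}(L) \cap \alphabet_0^*\sharp\alphabet_0^*$, invoke effective closure of DCFLs under inverse morphism and intersection with regular languages to obtain DPDAs for $L[w]$ and $L[x]$, and conclude by decidability of DPDA equivalence. The one cosmetic difference is that the paper fixes $\sharp$ as a reserved symbol outside the alphabet of $L$ up front, which also disposes of your worry about the erasing case $w = \epsilon$ (inverse morphism closure for DCFLs holds for erasing morphisms anyway).
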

\begin{proof}
Let us write $L = \lang(G)$.
By Lemma~\ref{lemma:decide-alignment}, it suffices to show that we can effectively obtain a decision procedure for $\equiv_L$.
We employ a technique similar to the method we used to decide $\equiv_L$ when $G$ is CC\@: we reduce the problem to checking equivalence of DCFLs.

Without loss of generality, let $\alphabet = \alphabet_0 \cup \{ \sharp \}$, with $\sharp \not \in \alphabet_0$, such that $L \subseteq \alphabet_0^*$.
For $w \in \alphabet^*$, we define the morphism $g_w: \alphabet^* \to \alphabet^*$ by setting $g_w(\sharp) = w$ and $g(a) = a$ for $a \in \alphabet_0$.

We now claim that $L[w] = g_w^{-1}(L) \cap \alphabet_0^*\sharp\alphabet_0^*$.
To see this, suppose that $u\sharp{}v \in L[w]$; then, since $g_w(u\sharp{}v) = uwv \in L$ and $u\sharp{}v \in \alphabet_0^*\sharp\alphabet_0^*$, we find that $u\sharp{}v \in g_w^{-1}(L)$.
For the other inclusion, suppose that $x \in g_w^{-1}(L) \cap \alphabet_0^*\sharp\alphabet_0^*$.
Since $x \in \alphabet_0^*\sharp\alphabet_0^*$, we can write $x = u\sharp{}v$ for $u,v \in \alphabet_0^*$.
Since $uwv = g_w(u\sharp{}v) = g(x) \in L$, we find that $u\sharp{}v \in L[w]$.

Since $L$ is a DCFL, we have a DPDA $M$ such that $L = \lang(M)$.
Furthermore, because DCFLs are closed under inverse morphism and intersection with regular languages~\cite{ginsburg-greibach-1966}, we can create for $w \in \alphabet^*$ a DPDA $M_w$ such that $\lang(M_w) = L[w]$.
Since it is decidable whether $\lang(M_w) = \lang(M_x)$~\cite{senizergues-2001}, we can decide whether $L[w] = L[x]$, and hence whether $w \equiv_L x$.
\end{proof}

\section{Further work}%
\label{section:further-work}

With regard to implementing a teacher for a given CC language, one detail remains to be settled.
The algorithm to learn CC languages from~\cite{clark-2010b} assumes the presence of an \emph{extended MAT}, in which the representation of the language in the equivalence query need not guarantee that the hypothesis language is in the class of languages being learned.
More concretely, this means that the algorithm might query the teacher with grammars that are not CC, and thus the decision procedure outlined in this paper need not apply.
Consequently, we wonder whether the learning algorithm can be adapted to work with a (proper) MAT, or alternatively, whether the decision procedure of this paper can be extended to accommodate the class of grammars that can be produced by the learning algorithm.

One possible direction for generalization of the decision procedure is the setting of \emph{multiple context-free grammars} (\emph{MCFGs})~\cite{seki-matsumura-fujii-kasami-1991}.
A notion corresponding to Clark-congruentiality for MCFGs is already known, and the class of languages generated by such MCFGs is also known to be learnable~\cite{yoshinaka-clark-2010}.
We conjecture that the decidability results can be lifted to Clark-congruential MCFGs, and that such a lifting would employ \emph{$n$-turn} DPDAs instead of one-turn DPDAs.

Equivalence and congruence are decidable for both DCFLs and CC languages.
To see if the case for CC languages follows from the case for DCFLs, one would have to investigate whether all CC grammars define a DCFL\@.
For what it's worth, the fact that we can decide whether a DCFG is CC appears to at least not contradict this possibility, and we have been unsuccessful in finding a counterexample thus far.

The question about the connection between CC languages and DCFLs can be seen as analogous to the (open) question of whether all pre-NTS grammars define a DCFL~\cite{autebert-boasson-1992}.
Since all NTS grammars are pre-NTS, and all pre-NTS grammars are in turn CC, it follows that every NTS language is a pre-NTS language, and in turn every pre-NTS language is a CC language; whether this inclusion is strict remains an open question.
It has been conjectured that these families of languages coincide~\cite{clark-2010b}.

\paragraph{Acknowledgements}
We would like to thank the anonymous referees of LearnAut and ICGI for their comments, which helped improve this paper.

\bibliography{bibliography.bib}

\begin{thebibliography}{10}

\bibitem{angluin-1987}
Dana Angluin.
\newblock Learning regular sets from queries and counterexamples.
\newblock {\em Inf. Comput.}, 75(2):87--106, 1987.
\newblock \href {http://dx.doi.org/10.1016/0890-5401(87)90052-6}
  {\path{doi:10.1016/0890-5401(87)90052-6}}.

\bibitem{autebert-boasson-1992}
Jean{-}Michel Autebert and Luc Boasson.
\newblock The equivalence of pre-{NTS} grammars is decidable.
\newblock {\em Mathematical Systems Theory}, 25(1):61--74, 1992.
\newblock \href {http://dx.doi.org/10.1007/BF01368784}
  {\path{doi:10.1007/BF01368784}}.

\bibitem{bar-hillel-perles-shamir-1961}
Yehoshua Bar-Hillel, Micha Perles, and Eli Shamir.
\newblock On formal properties of simple phrase structure grammars.
\newblock {\em Sprachtypologie und Universalienforschung}, 14:143--172, 1961.

\bibitem{beeri-1975}
Catriel Beeri.
\newblock An improvement of {V}aliant's decision procedure for equivalence of
  deterministic finite-turn pushdown automata.
\newblock In {\em Proc. Foundations of Computer Science (FOCS)}, pages
  128--134, 1975.
\newblock \href {http://dx.doi.org/10.1109/SFCS.1975.4}
  {\path{doi:10.1109/SFCS.1975.4}}.

\bibitem{boasson-1980}
Luc Boasson.
\newblock Derivations et redutions dans les grammaires algebriques.
\newblock In {\em Proc. Automata, Languages and Programming}, pages 109--118,
  1980.
\newblock \href {http://dx.doi.org/10.1007/3-540-10003-2_64}
  {\path{doi:10.1007/3-540-10003-2_64}}.

\bibitem{boasson-senizergues-1985}
Luc Boasson and G{\'{e}}raud S{\'{e}}nizergues.
\newblock {NTS} languages are deterministic and congruential.
\newblock {\em J. Comput. Syst. Sci.}, 31(3):332--342, 1985.
\newblock \href {http://dx.doi.org/10.1016/0022-0000(85)90056-X}
  {\path{doi:10.1016/0022-0000(85)90056-X}}.

\bibitem{book-otto-1993}
Ronald~V. Book and Friedrich Otto.
\newblock {\em String-Rewriting Systems}.
\newblock Texts and Monographs in Computer Science. Springer, 1993.
\newblock \href {http://dx.doi.org/10.1007/978-1-4613-9771-7}
  {\path{doi:10.1007/978-1-4613-9771-7}}.

\bibitem{chomsky-1962}
Noam Chomsky.
\newblock Context-free grammars and pushdown storage.
\newblock {\em MIT. Res. Lab. Electron. Quart. Prog. Report}, 65:187--194,
  1962.

\bibitem{clark-2010b}
Alexander Clark.
\newblock Distributional learning of some context-free languages with a
  minimally adequate teacher.
\newblock In {\em Proc. Grammatical Inference (ICGI)}, pages 24--37, 2010.
\newblock \href {http://dx.doi.org/10.1007/978-3-642-15488-1_4}
  {\path{doi:10.1007/978-3-642-15488-1_4}}.

\bibitem{ginsburg-greibach-1966}
Seymour Ginsburg and Sheila~A. Greibach.
\newblock Deterministic context free languages.
\newblock {\em Information and Control}, 9(6):620--648, 1966.
\newblock \href {http://dx.doi.org/10.1016/S0019-9958(66)80019-0}
  {\path{doi:10.1016/S0019-9958(66)80019-0}}.

\bibitem{nerode-1958}
Anil Nerode.
\newblock Linear automaton transformations.
\newblock {\em Proc. American Mathematical Society}, 9(4):541--544, 1958.
\newblock \href {http://dx.doi.org/doi:10.2307/2033204}
  {\path{doi:doi:10.2307/2033204}}.

\bibitem{seki-matsumura-fujii-kasami-1991}
Hiroyuki Seki, Takashi Matsumura, Mamoru Fujii, and Tadao Kasami.
\newblock On multiple context-free grammars.
\newblock {\em Theor. Comput. Sci.}, 88(2):191--229, 1991.
\newblock \href {http://dx.doi.org/10.1016/0304-3975(91)90374-B}
  {\path{doi:10.1016/0304-3975(91)90374-B}}.

\bibitem{senizergues-1985}
G{\'{e}}raud S{\'{e}}nizergues.
\newblock The equivalence and inclusion problems for {NTS} languages.
\newblock {\em J. Comput. Syst. Sci.}, 31(3):303--331, 1985.
\newblock \href {http://dx.doi.org/10.1016/0022-0000(85)90055-8}
  {\path{doi:10.1016/0022-0000(85)90055-8}}.

\bibitem{senizergues-2001}
G{\'{e}}raud S{\'{e}}nizergues.
\newblock {$L(A)=L(B)$}? {D}ecidability results from complete formal systems.
\newblock {\em Theor. Comput. Sci.}, 251(1-2):1--166, 2001.
\newblock \href {http://dx.doi.org/10.1016/S0304-3975(00)00285-1}
  {\path{doi:10.1016/S0304-3975(00)00285-1}}.

\bibitem{senizergues-2003}
G{\'{e}}raud S{\'{e}}nizergues.
\newblock The equivalence problem for $t$-turn {DPDA} is co-{NP}.
\newblock In {\em Proc. Automata, Languages and Programming (ICALP)}, pages
  478--489, 2003.
\newblock \href {http://dx.doi.org/10.1007/3-540-45061-0_39}
  {\path{doi:10.1007/3-540-45061-0_39}}.

\bibitem{stirling-2002}
Colin Stirling.
\newblock Deciding {DPDA} equivalence is primitive recursive.
\newblock In {\em Proc. Automata, Languages and Programming (ICALP)}, pages
  821--832, 2002.
\newblock \href {http://dx.doi.org/10.1007/3-540-45465-9_70}
  {\path{doi:10.1007/3-540-45465-9_70}}.

\bibitem{valiant-1974}
Leslie~G. Valiant.
\newblock The equivalence problem for deterministic finite-turn pushdown
  automata.
\newblock {\em Information and Control}, 25(2):123--133, 1974.
\newblock \href {http://dx.doi.org/10.1016/S0019-9958(74)90839-0}
  {\path{doi:10.1016/S0019-9958(74)90839-0}}.

\bibitem{yoshinaka-clark-2010}
Ryo Yoshinaka and Alexander Clark.
\newblock Polynomial time learning of some multiple context-free languages with
  a minimally adequate teacher.
\newblock In {\em Proc. Formal Grammar (FG)}, pages 192--207, 2010.
\newblock \href {http://dx.doi.org/10.1007/978-3-642-32024-8_13}
  {\path{doi:10.1007/978-3-642-32024-8_13}}.

\bibitem{zhang-1992}
Louxin Zhang.
\newblock The pre-{NTS} property is undecidable for {CFG}s.
\newblock {\em Inf. Process. Lett.}, 44(4):181--184, 1992.
\newblock \href {http://dx.doi.org/10.1016/0020-0190(92)90082-7}
  {\path{doi:10.1016/0020-0190(92)90082-7}}.

\end{thebibliography}

\newpage
\appendix

\section{The language of \texorpdfstring{$M_w$}{the constructed DPDA}}%
\label{appendix:constructed-dpda-language}

To analyze the behavior of $M_w$, we first note that if it is in a configuration with a state of the form $u\sharp{}v$, then all reachable configurations are related to that configuration by $\leadsto_{G_w}$.
In effect, this shows that $M_w$ proceeds according to $\leadsto_{G_w}$.

\begin{lemma}%
\label{lemma:pda-simulates-reduction}
If $u_0, u_1, v_0, v_1, x_0, x_1, y_0, y_1 \in \alphabet^*$ s.t. $\angl{u_0\sharp{}v_0, y_0, x_0^R\$} \progressp_{M_w} \angl{u_1\sharp{}v_1, y_1, x_1^R\$}$ then it follows that $x_0u_0w'v_0z_0 \leadsto_{G_w} x_1u_1w'v_1y_1$.
\end{lemma}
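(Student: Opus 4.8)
The plan is to read each configuration $\angl{u\sharp v, y, x^R\$}$ of $M_w$ as an encoding of the word $xuw'vy \in \alphabet^*$, obtained by decoding the stack $x^R\$$ into a left context $x$ (reverse it and discard the bottom marker), substituting $w'$ for the $\sharp$ recorded in the buffer, and appending the unread input $y$ on the right (so the symbol $z_0$ written in the statement is the source input $y_0$). Under this reading the claim is precisely that a single application of $\progressp_{M_w}$ carries the word encoded by the source configuration to the word encoded by the target configuration via $\leadsto_{G_w}$. I would establish this by a case analysis on which transition rule of $M_w$ fires. Since the source state $u_0\sharp v_0$ is of the form $u\sharp v$, it is distinct from $q_0$, so the first two rules of the definition of $M_w$ (both with source $q_0$) cannot apply; only the third, fourth, and fifth rules remain.

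First I would dispatch the two \emph{reading} rules, which leave the encoded word unchanged, so that the conclusion follows by reflexivity of the semi-Thue system $\leadsto_{G_w}$. If the third rule fires, a symbol $a$ is popped from the top of the stack and prepended to the left buffer: writing $x_0 = x_1 a$ we have $x_0^R = a x_1^R$, and the step sends $\angl{u_0\sharp v_0, y_0, x_0^R\$}$ to $\angl{a u_0\sharp v_0, y_0, x_1^R\$}$, so $u_1 = a u_0$, $v_1 = v_0$, $y_1 = y_0$. Decoding both sides gives $x_0 u_0 w' v_0 y_0 = x_1 a u_0 w' v_0 y_0 = x_1 u_1 w' v_1 y_1$, an identity. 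If the fourth rule fires, a symbol $b$ is read from the input and appended to the right buffer while the stack is untouched: writing $y_0 = b y_1$ we get $u_1 = u_0$, $v_1 = v_0 b$, $x_1 = x_0$, whence $x_0 u_0 w' v_0 y_0 = x_0 u_0 w' (v_0 b) y_1 = x_1 u_1 w' v_1 y_1$. In both cases the two encoded words coincide, and reflexivity yields the required $\leadsto_{G_w}$-step.

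The only genuine reduction comes from the fifth rule. Here neither the input nor the stack changes, and the buffer is rewritten from $u_0\sharp v_0$ to $x\sharp y$ exactly because the rule guarantees $u_0 w' v_0 \leadsto_{G_w} x w' y$ (the $\preceq$-minimality merely fixes a canonical target and plays no role here). Thus $u_1 = x$, $v_1 = y$, $x_1 = x_0$, $y_1 = y_0$, and I must show $x_0 u_0 w' v_0 y_0 \leadsto_{G_w} x_0 x w' y\, y_0$. This is immediate from the defining property of a semi-Thue system: since $\leadsto_{G_w}$ is compatible with concatenation on both sides, the reduction $u_0 w' v_0 \leadsto_{G_w} x w' y$ may be placed in the context with $x_0$ on the left and $y_0$ on the right, giving exactly the desired step.

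The argument is essentially bookkeeping, and the single place that demands care is the stack-decoding convention used in the third case: one must check that popping the top symbol and prepending it to the left buffer exactly preserves the decoded left context, i.e.\ that $x_0 u_0 = x_1 u_1$ once one fixes the convention that a stack $x^R\$$ decodes to the left context $x$. With this convention pinned down, every case collapses to either an identity of words or a single application of the semi-Thue compatibility law, so I expect no substantive obstacle.
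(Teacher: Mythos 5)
Your proof is correct and takes essentially the same approach as the paper's: a case analysis on which transition rule of $M_w$ fires, where the two reading rules (pop-to-left-buffer and read-to-right-buffer) yield identities of the encoded words and the reduction rule is handled by compatibility of $\leadsto_{G_w}$ with concatenation. You also correctly identify that the $z_0$ in the statement is a typo for $y_0$.
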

\begin{proof}
There are three cases to consider.
First, if $uw'v$ is reducible, then $u_0w'v_0 \leadsto_{G_w} u_1w'v_1$, as well as $y_0 = y_1$ and $x_0 = x_1$; the claim then follows.
Second, if $uw'v$ is irreducible and $x_0$ is non-empty, with $|u_0| < N$, then $x_0 = x_1a$ and $u_1 = au_0$, as well as $y_0 = y_1$ and $v_0 = v_1$; we derive that
$x_0u_0w'v_0y_0 = x_1au_0w'v_0y_0 = x_1u_1w'v_0y_0 = x_1u_1w'v_1y_1$.
Lastly, if $uw'v$ is irreducible and either $x_0$ is empty or $|u_0| = N$, then $ay_1 = y_0$ and $v_1 = v_0a$, as well as $x_0 = x_1$ and $u_0 = u_1$; thus, $x_0u_0w'v_0y_0 = x_0u_0w'v_0ay_1 = x_0u_0w'v_1y_1 = x_1u_1w'v_1y_1$.
\end{proof}

With this in hand, we can show that $M_w$ accepts the desired language.

\dpdaacceptscontexts*
\begin{proof}
For the inclusion from left to right, suppose that $x \in \lang(M_w)$.
We then know that $\angl{q_0, x, \$} \progressp_{M_w}^* \angl{u_1\sharp{}v_1, \epsilon, \$}$ such that there exists an $A \in I$ with $\vartheta_{G_w}(A) = u_1w'v_1$.
Thus, $x = u_0\sharp{}v_0$ such that $\angl{q_0, u_0\sharp{}v_0, \$} \progressp_{M_w}^* \angl{\sharp, v_0, u_0^R\$} \progressp_{M_w}^* \angl{u_1\sharp{}v_1, \epsilon, \$}$.
By Lemma~\ref{lemma:pda-simulates-reduction}, we have $u_0w'v_0 \leadsto_{G_w} u_1w'v_1 = \vartheta_{G_w}(A)$.
By Lemma~\ref{lemma:derivations-vs-reductions}, also $u_0w'v_0 \in \lang(G_w, A) \subseteq \lang(G_w)$.

For the inclusion from right to left, suppose that $u, v \in \alphabet^*$ are such that $uw'v \in \lang(G_w)$; our aim is to show that $u\sharp{}v \in \lang(M_w)$.
By construction of $M_w$, this DPDA first processes the input up to $\sharp$ to reach $C^\sharp = \angl{\sharp, \epsilon, v, u^R\$}$.

Let $C = \angl{u_1\sharp{}v_1, y, z^R\$}$ be the unique halting configuration of $M_w$ starting from $C^\sharp$; this configuration exists uniquely, because every transition of $M_w$ either advances the input, or performs a reduction using $\leadsto_{G_w}$.
We then have that $u_1w'v_1 \in \irr_{G_w}$, otherwise $C$ would not be halting.
Now, we observe that
\begin{inparaenum}[(i)]
    \item
    either $z$ is empty, or $|u_1| = N$ --- for otherwise $M_w$ could pop letters off the stack into the left buffer, meaning that $C$ would not be halting, and
    \item
    either $y$ is empty, or $|v_1| = N$ --- for otherwise $M_w$ could consume letters from the input into the right buffer, and so $C$ would again not be halting.
\end{inparaenum}

A reducible substring of $zu_1w'v_1y$ must start at least $N$ positions before the start of $w'$, and end at most $N$ positions from the end of $w'$ (by Lemma~\ref{lemma:reduction-pattern}).
Consequently, $\leadsto_{G_w}$ cannot reduce
\begin{inparaenum}[(a)]
    \item
    a substring overlapping $z$ --- otherwise $|u_1| < N$ and $z \neq \epsilon$, nor
    \item
    a substring overlapping $y$ --- otherwise $|v_1| < N$ and $y \neq \epsilon$.
\end{inparaenum}
Thus, if $zu_1w'v_1y$ were reducible, then the reducible substring must occur in $u_1w'v_1$ --- but this is a contradiction, since $u_1w'v_1 \in \irr_{G_w}$; hence, $zu_1w'v_1y$ is irreducible.

By Lemma~\ref{lemma:pda-simulates-reduction}, we know that $uw'v \leadsto_{G_w} zu_1w'v_1y$; also, by (the proof of) Lemma~\ref{lemma:clark-congruential-implies-congruential}, we have that $uw'v \equiv_{\lang(G_w)} zu_1w'v_1y$, and hence $zu_1w'v_1y \in \lang(G_w)$.
By Lemma~\ref{lemma:derivations-vs-reductions}, it follows that there exists an $A \in I$ such that $zu_1w'v_1y \leadsto_{G_w} \vartheta_{G_w}(A)$.
Consequently, $\vartheta_{G_w}(A) = zu_1w'v_1y$, and so either $|u_1| < N$, and thus $z = \epsilon$, or $|u_1| = N$, in which case $z = \epsilon$ again, as $|\vartheta_{G_w}(A)| \leq N$.
By a similar argument, we find that $y = \epsilon$.
But then $\vartheta_{G_w}(A) = u_1w'v_1$, and thus $u_1\sharp{}v_1 \in F$.
We can conclude that $u\sharp{}v \in \lang(M_w)$.
\end{proof}

\section{Transformations of CFGs}\label{appendix:transformations}

\begin{lemma}%
\label{lemma:weak-omega-reduction-construction}
We can construct a CFG $G_\omega$ using nonterminals of $G$, such that
\begin{inparaenum}[(i)]
    \item\label{lemma:weak-omega-reduction-construction:languages}
    if $A$ is a nonterminal of $G_\omega$, then $\lang(G_\omega, A) = \lang(G, A)$, and
    \item\label{lemma:weak-omega-reduction-construction:language}
    $\lang(G_\omega) = \lang(G)$, and
    \item\label{lemma:weak-omega-reduction-construction:soundness}
    $G$ is weakly $\omega$-reduced, and
    \item\label{lemma:weak-omega-reduction-construction:preservation}
    if $G$ is CC, then so is $G_\omega$.
\end{inparaenum}
\end{lemma}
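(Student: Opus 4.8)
The plan is to obtain $G_\omega$ from $G$ by three successive, language-preserving surgeries, each justified by a routine induction on derivations, and then to read off the four claimed properties. First I would decide, for each nonterminal $A$ of $G$, whether $\lang(G, A)$ is empty, finite and nonempty, or infinite (all standard), and for the finitely many $A$ with finite language I would also compute the finite set $\lang(G, A)$ explicitly. With this data the three steps are: (1) delete every nonterminal whose language is empty, together with every production that mentions such a nonterminal on its right-hand side (and drop them from $I$); (2) eliminate the \emph{non-initial} nonterminals of finite nonempty language by inlining; and (3) replace the productions of every surviving \emph{initial} nonterminal of finite language by direct terminal productions. Since no step ever introduces a nonterminal, we automatically get $V_\omega \subseteq V$.

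For step (2) I would collect the set $U$ of non-initial nonterminals with finite nonempty language and perform a single simultaneous substitution: discard every production whose left-hand side lies in $U$, and in each remaining right-hand side replace every occurrence of a nonterminal $A \in U$ by a word of $\lang(G, A)$, ranging over the finite Cartesian product of choices. Because each $\lang(G, A)$ was precomputed, this yields finitely many new productions and involves no recursion, even when right-hand sides contain several $U$-nonterminals or chains among them. For step (3), a surviving initial nonterminal $A$ of finite language simply has all its productions replaced by $\{ A \production w : w \in \lang(G, A) \}$, so its right-hand sides become terminal strings while $\lang(G_\omega, A) = \lang(G, A)$ is preserved exactly; note that it is harmless for an infinite-language nonterminal to keep referring to such an $A$, since the weak $\omega$-reduction condition constrains only productions of finite-language nonterminals.

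The bulk of the verification is property (i), that $\lang(G_\omega, A) = \lang(G, A)$ for every surviving $A$, and I expect this to be the main obstacle. The key sublemma is that inlining a finite-language nonterminal preserves languages: any $G$-derivation can be rearranged so that each $A \in U$ is expanded completely to some terminal word of $\lang(G, A)$ before the context around it is developed, and such a complete expansion corresponds to exactly one substituted production of $G_\omega$; conversely every $G_\omega$-derivation lifts back by reintroducing $A$ and its expansion. Steps (1) and (3) preserve languages for analogous reasons (deleting only productions that can never contribute a terminal string, respectively replacing a nonterminal's production set by its exact language). Granting property (i), the rest is immediate: (ii) follows by taking the union over the surviving initial nonterminals, since removing empty-language initials removes nothing from $\lang(G)$ and the initial set is otherwise untouched; $G_\omega$ is weakly $\omega$-reduced because after the surgeries every non-initial nonterminal has infinite language and every finite-language (hence initial) nonterminal has only terminal right-hand sides; and (iv) holds because $\lang(G_\omega) = \lang(G)$ makes $\equiv_{\lang(G_\omega)}$ and $\equiv_{\lang(G)}$ coincide, so the witnesses $x_A$ certifying that $G$ is CC serve verbatim, via $\lang(G_\omega, A) = \lang(G, A) \subseteq \cc{x_A}{\equiv_{\lang(G)}}$, to certify that $G_\omega$ is CC.
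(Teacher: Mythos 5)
Your proposal is correct and follows essentially the same approach as the paper: the paper performs your three surgeries in a single stroke, via a substitution $\nu$ that replaces every finite-language symbol (empty-language ones included, which silently deletes their productions) by its finite set of words, keeps exactly the initial and the infinite-language nonterminals, and proves language preservation by the same induction on derivation length that underlies your inlining sublemma. The remaining differences are cosmetic: the paper retains empty-language initials (which end up with no productions) and inlines finite-language initials occurring in right-hand sides, whereas you drop the former and keep references to the latter --- both choices satisfy the definition of weakly $\omega$-reduced and the four claimed properties equally well.
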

\begin{proof}
We choose $G_\omega = \angl{V_\omega, P_\omega, I}$, where $V_\omega$ and $P_\omega$ are the smallest sets satisfying
\begin{mathpar}
\inferrule{%
    A \in V \\
    \lang(G, A)\ \mathrm{infinite}
}{%
    A \in V_\omega
}
\and
\inferrule{%
    A \in I
}{%
    A \in V_\omega
}
\and
\inferrule{%
    A \production \alpha \in P \\
    A \in V_\omega \\
    \alpha' \in \nu(\alpha)
}{%
    A \production \alpha' \in P_\omega
}
\end{mathpar}
and in which $\nu: \alphabetf^* \to 2^{{(V_\omega \cup \alphabet)}^*}$ is the substitution induced by setting for $\alpha \in \alphabetf$:
\[
\nu(\alpha) =
\begin{cases}
\lang(G, \alpha)     & \lang(G, \alpha)\ \mathrm{finite} \\
\{ \alpha \}         & \lang(G, \alpha)\ \mathrm{infinite} \\
\end{cases}
\]
Note that $G$ is a proper CFG\@; in particular, $P_\omega$ is finite, since if $\alpha \in \alphabetf^*$, then $\nu(\alpha)$ is finite.

We first argue claim~\eqref{lemma:weak-omega-reduction-construction:languages}, which immediately implies~\eqref{lemma:weak-omega-reduction-construction:language}.
Let $A \in V_\omega$.
\begin{itemize}
    \item[($\subseteq$)]
    Let $A \deriv_{G_\omega}^n w$ for some $n \in \naturals$.
    The proof proceeds by induction on $n$.
    In the base, where $n = 1$, we have $A \production w \in P_\omega$.
    By construction of $P_\omega$, there exists an $A \production \alpha \in P$ such that $w \in \nu(\alpha)$, i.e., such that $\alpha \deriv_G^* w$; thus, $A \deriv_G^* w$.

    For the inductive step, let $n > 0$ and assume that the claim holds for $n' < n$.
    We then find $\alpha' \in {(V_\omega \cup \alphabet)}^*$ such that $A \deriv_{G_\omega} \alpha' \deriv_{G_\omega}^{n-1} w$, with $A \production \alpha' \in P_\omega$.
    By construction of $P_\omega$, we know that $\alpha' \in \nu(\alpha)$ for some $A \production \alpha \in P$.
    Let us write $\alpha = a_0A_0a_1A_1\cdots{}A_{k-1}a_k$; we then know that $\alpha' = a_0\alpha_0a_1\alpha_1\cdots{}\alpha_{k-1}a_k$ such that for $0 \leq i < k$ it holds that $A_i \deriv_G^* \alpha_i$ for $n_i \leq n-1$.
    We can then write $w = a_0w_0a_1w_1\cdots{}w_{k-1}a_k$ such that for $0 \leq i < k$ we know that $\alpha_i \deriv_{G_\omega}^{n_i} w_i$ with $n_i \leq n-1$.
    By induction, we obtain that $\alpha_i \deriv_G^* w_i$ as well.
    In total,
    \[A \deriv_G a_0A_0a_1A_1\cdots{}A_{k-1}a_k \deriv_G^* a_0\alpha_0a_1\alpha_1\cdots{}\alpha_{k-1}a_k \deriv_G^* a_0w_0a_1w_1\cdots{}w_{k-1}a_k = w.\]

    \item[($\supseteq$)]
    Let $A \deriv_G^n w$ for some $n \in \naturals$.
    We proceed by induction on $n$.
    In the base, where $n = 1$, we have $A \production w \in P$.
    It follows that $w \in \nu(w)$, and so $A \production w \in P_\omega$, thus $A \deriv_{G_\omega}^* w$.

    For the inductive step, let $n > 1$ and assume that the claim holds for $n' < n$.
    We then find $\alpha \in {(V_\omega \cup \alphabet)}^*$ such that $A \deriv_G \alpha \deriv_G^{n-1} w$, with $A \production \alpha \in P$.
    We can write $\alpha = a_0A_0a_1A_1\cdots{}A_{k-1}a_k$ and $w = a_0w_0a_1w_1\cdots{}w_{k-1}a_k$ such that for $0 \leq i < k$ it holds that $A_i \deriv_G^* w_i$.
    For $0 \leq i < k$, we now choose $\alpha_i = A_i$ if $\lang(G, A_i)$ is infinite, and $\alpha_i = w_i$ otherwise.
    It follows that $\alpha' = a_0\alpha_0a_1\alpha_1\cdots\alpha_{k-1}a_k \in \nu(\alpha)$, which means that $A \production \alpha' \in P_\omega$.
    Furthermore, note that for $0 \leq i < k$, it holds that $\alpha_i \deriv_{G_\omega}^* w_i$ (where we apply the induction hypothesis for the case where $\lang(G, A_i)$ is infinite).
    Consequently, $A \deriv_{G_\omega} \alpha' \deriv_{G_\omega}^* w$.
\end{itemize}

As for~\eqref{lemma:weak-omega-reduction-construction:soundness}, note that if $A \in V_\omega \setminus I$, then $\lang(G_\omega, A) = \lang(G, A)$ is infinite by construction.
Also, if $A \production \alpha \in P_\omega$ and $\lang(G_\omega, A)$ is finite, then so is $\lang(G, A)$; since $\alpha \in \nu(\alpha) = \lang(G, \alpha)$, also $\alpha \in \alphabet^*$.
We can thus conclude that $G_\omega$ is $\omega$-reduced.
Lastly, for~\eqref{lemma:weak-omega-reduction-construction:preservation}, it suffices to observe that for $A \in V_\omega$ we have $\lang(G_\omega, A) = \lang(G, A) \subseteq \cc{\vartheta_G(A)}{\equiv_{\lang(G)}} = \cc{\vartheta_{G_\omega}(A)}{\equiv_{\lang(G_\omega)}}$.
\end{proof}

\begin{lemma}%
\label{lemma:inverse-strictly-alphabetic-morphism-construction}
Let $h: \alphabet^* \to \alphabet^*$ be a strictly alphabetic morphism.%
\footnote{With a little effort, this proof can be adapted to work for general alphabetic morphisms; the trick is to add a symbol that can generate all words over letters that are mapped to $\epsilon$ by $h$, and to intersperse this symbol in the right-hand sides of the productions of $G^h$.}
We can construct a CFG $G^h$ using nonterminals of $G$, such that
\begin{inparaenum}[(i)]
    \item\label{lemma:inverse-strictly-alphabetic-morphism-construction:languages}
    if $A$ is a nonterminal of $G^h$, then $\lang(G^h, A) = h^{-1}(\lang(G, A))$, and
    \item\label{lemma:inverse-strictly-alphabetic-morphism-construction:language}
    $\lang(G^h) = h^{-1}(\lang(G))$, and
    \item\label{lemma:inverse-strictly-alphabetic-morphism-construction:preservation}
    if $G$ is CC, then so is $G^h$.
\end{inparaenum}
\end{lemma}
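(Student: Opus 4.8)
The plan is to build $G^h = \angl{V, P^h, I}$ on the same nonterminals and initial nonterminals as $G$, obtaining $P^h$ by replacing each terminal on a right-hand side of $P$ with an arbitrary $h$-preimage. Concretely, I would extend $h$ to a morphism $\hat{h} : \alphabetf^* \to \alphabetf^*$ acting as $h$ on $\alphabet$ and as the identity on $V$, and set $P^h = \{ A \production \beta : A \production \hat{h}(\beta) \in P \}$. Since $\alphabet$ is finite, every terminal has finitely many $h$-preimages, and $\hat{h}$ is length-preserving and fixes the positions of nonterminals, so each right-hand side of $P$ has only finitely many $\hat{h}$-preimages; hence $P^h$ is finite and $G^h$ is a legitimate CFG using only nonterminals of $G$.

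For claim (i) I would first record the step-wise correspondence: if $\gamma \deriv_{G^h} \delta$, say via $B \production \beta \in P^h$, then $B \production \hat{h}(\beta) \in P$ and $\hat{h}(\gamma) \deriv_G \hat{h}(\delta)$. Iterating this and specialising to $\gamma = A$ (so that $\hat{h}(A) = A$) gives, for every $w \in \alphabet^*$, that $A \deriv_{G^h}^* w$ implies $A \deriv_G^* h(w)$, i.e.\ $\lang(G^h, A) \subseteq h^{-1}(\lang(G, A))$. The reverse inclusion is the crux. Given a derivation witnessing $A \deriv_G^* h(w)$, I would pass to a parse tree $T$ with root $A$ and yield $h(w)$. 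As $h$ is strictly alphabetic it is length-preserving, so $|h(w)| = |w|$ and the terminal leaves of $T$ are in position-wise bijection with the letters of $w$; relabelling the $i$-th terminal leaf from its label to the $i$-th letter of $w$ turns every applied production $B \production \alpha$ into some $B \production \beta$ with $\hat{h}(\beta) = \alpha$, hence into a production of $P^h$. The relabelled tree witnesses $A \deriv_{G^h}^* w$, giving $h^{-1}(\lang(G, A)) \subseteq \lang(G^h, A)$.

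Claim (ii) is then immediate, since $I^h = I$ and inverse image commutes with union: $\lang(G^h) = \bigcup_{A \in I} \lang(G^h, A) = \bigcup_{A \in I} h^{-1}(\lang(G, A)) = h^{-1}(\lang(G))$. For claim (iii), write $L = \lang(G)$, so that $\lang(G^h) = h^{-1}(L)$. The key fact is that $h$ reflects the syntactic congruence: if $h(s) \equiv_L h(t)$ then $s \equiv_{h^{-1}(L)} t$, because for any context $p, q \in \alphabet^*$ we have $psq \in h^{-1}(L)$ iff $h(p)h(s)h(q) \in L$ iff $h(p)h(t)h(q) \in L$ iff $ptq \in h^{-1}(L)$, where the middle equivalence instantiates $h(s) \equiv_L h(t)$ at the contexts $h(p), h(q)$. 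Now fix $A \in V$ and take $s, t \in \lang(G^h, A) = h^{-1}(\lang(G, A))$; then $h(s), h(t) \in \lang(G, A)$, which lie in a single $\equiv_L$-class because $G$ is CC, so $h(s) \equiv_L h(t)$ and hence $s \equiv_{h^{-1}(L)} t$. Thus $\lang(G^h, A)$ sits inside one $\equiv_{\lang(G^h)}$-class, and choosing $x_A$ to be any element of $\lang(G^h, A)$ (or $\epsilon$ when it is empty) exhibits $G^h$ as CC.

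I expect the reverse inclusion of claim (i) to be the main obstacle. A direct induction on derivation length is awkward, since $\hat{h}(\gamma) \deriv_G^* h(w)$ only determines $w$ up to $h$-equivalence at intermediate stages, so the preimage choices made along the way need not add up to exactly $w$. The parse-tree relabelling avoids this by committing to the target word $w$ globally and reading off the required preimage productions from the fixed, length-preserving correspondence between leaves and letters.
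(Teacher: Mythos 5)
Your proof is correct, and your construction of $G^h$ is exactly the paper's: the same extension $\hat{h}$ of $h$ fixing $V$, the same $P^h = \{ A \production \beta : A \production \hat{h}(\beta) \in P \}$ over the same nonterminals and initial symbols, and the same finiteness remark. Claims (ii) and (iii) are also argued the same way; for (iii) the paper merely inlines, rather than isolates, your observation that $h(s) \equiv_{\lang(G)} h(t)$ implies $s \equiv_{h^{-1}(\lang(G))} t$ by instantiating contexts at $h(p)$ and $h(q)$. The one genuine divergence is the hard inclusion $h^{-1}(\lang(G, A)) \subseteq \lang(G^h, A)$. You prove it by relabelling the terminal leaves of a parse tree for $h(w)$, exploiting that strict alphabeticity puts the leaves in position-wise bijection with the letters of $w$. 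The paper instead carries out precisely the induction on derivation length that you dismiss as awkward, and it goes through because the statement is strengthened to sentential forms with the preimage as a parameter: for all $\beta \in \alphabetf^*$, if $A \deriv_G^n \hat{h}(\beta)$ then $A \deriv_{G^h}^* \beta$. This sidesteps your objection that preimage choices made along the way need not add up to the target word: no choices are made along the way, since the preimage $\beta$ is fixed in the statement, and strict alphabeticity lets the decomposition of $\beta$ track the decomposition of $\hat{h}(\beta)$ induced by the first production applied. Both arguments rest on the same position-preservation property; yours is more direct and avoids strengthening the induction hypothesis, at the cost of invoking parse trees, while the paper's stays entirely within the derivation relation.
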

\begin{proof}
First, let us extend $h$ to $\hat{h}: \alphabetf^* \to \alphabetf^*$ in the following way:
\[
\hat{h}(\alpha) =
\begin{cases}
h(\alpha)  & \alpha \in \alphabet \\
\alpha     & \alpha \in V
\end{cases}
\]
We construct the grammar $G^h = \angl{V, P^h, I}$, where $P^h = \{ A \production \alpha : A \production \hat{h} \in P \}$.
Note that $G$ is a proper CFG\@; in particular, $P^h$ is finite, since if $\alpha \in \alphabetf^*$, then there are only finitely many $\alpha' \in \alphabetf^*$ such that $h(\alpha') = \alpha$, since $h$ is strictly alphabetic.

We now pursue two sub-claims, as follows.
\begin{itemize}
    \item
    Let $A \in V$, and suppose that $A \deriv_G^n \hat{h}(\beta)$ for some $n \in \naturals$ and $\beta \in \alphabetf^*$; we claim that $A \deriv_{G^h}^* \beta$.
    The proof proceeds by induction on $n$.
    In the base, where $n = 0$, we know that $\hat{h}(\beta) = A$, and therefore $A = \beta$; it immediately follows that $A \deriv_{G^h}^* \beta$.
    For the inductive step, let $n > 0$ and assume the claim holds for $n' < n$.
    We find an $A \production \alpha \in P$ such that $\alpha \deriv_G^{n-1} \hat{h}(\beta)$.
    Let us write $\alpha = w_0A_0w_1A_1\cdots{}A_{m-1}w_m$ such that $w_0, w_1, \dots, w_m \in \alphabet^*$ and $A_0, A_1, \dots, A_{m-1} \in V$.
    Since $\hat{h}$ is strictly alphabetic, we can write $\beta$ as $x_0\beta_0x_1\beta_1\cdots{}\beta_{m-1}x_m$ such that for $0 \leq i < m$ we have that $A_i \deriv_G^{n_i} \hat{h}(\beta_i)$ for some $n_i \leq n-1$, and for $0 \leq i \leq m$ we have that $\hat{h}(x_i) = w_i$.
    By induction, we have for $0 \leq i < m$ that $A_i \deriv_{G^h}^* \beta_i$.
    Now, choose $\alpha' = x_0A_0x_1A_1\cdots{}A_{m-1}x_m$ and note that $\hat{h}(\alpha') = \alpha$; consequently, $A \deriv_{G^h} \alpha' \deriv_{G^h}^* x_0\beta_0x_1\beta_1\cdots\beta_{m-1}x_m = \beta$.

    \item
    Conversely, suppose that $A \deriv_{G^h}^n \beta$ for some $n \in \naturals$; we claim that $A \deriv_G^* \hat{h}(\beta)$.
    The proof proceeds by induction on $n$.
    In the base, where $n = 0$, we have that $A = \hat{h}(\beta)$, and therefore $\beta = A$; it immediately follows that $A \deriv_{G^h}^* \beta$.
    For the inductive step, let $n > 0$ and assume the claim holds for $n' < n$.
    We find an $A \production \alpha \in P^h$ such that $A \production \hat{h}(\alpha) \in P$ and $\alpha \deriv_{G^h}^{n-1} \beta$.
    Let us write $\alpha = w_0A_0w_1A_1\cdots{}A_{m-1}w_m$ such that $w_0, w_1, \dots, w_m \in \alphabet^*$ and $A_0, A_1, \dots, A_{m-1} \in V$.
    We can also write $\beta = w_0\beta_0w_1\beta_1\cdots\beta_{m-1}w_m$ such that for $0 \leq i < m$ it holds that $A_i \deriv_{G^h}^{n_i} \beta_i$ with $n_i \leq n-1$.
    By induction, we have for $0 \leq i < n$ that $A_i \deriv_{G^h}^* \hat{h}(\beta_i)$; thus, it follows that
    \begin{align*}
    A   &\deriv_G \hat{h}(\alpha) = \hat{h}(w_0)A_0\hat{h}(w_1)A_1\cdots{}A_{m-1}\hat{h}(w_m) \\
        &\deriv_G \hat{h}(w_0)\hat{h}(\beta_0)\hat{h}(w_1)\hat{h}(\beta_1)\cdots\hat{h}(\beta_{m-1})\hat{h}(w_m) = \hat{h}(\beta)
    \end{align*}
\end{itemize}
From the above, claims~\eqref{lemma:inverse-strictly-alphabetic-morphism-construction:languages} and~\eqref{lemma:inverse-strictly-alphabetic-morphism-construction:language} follow quite easily.

As for~\eqref{lemma:inverse-strictly-alphabetic-morphism-construction:preservation}, it suffices to show that if $A \in V$ and $w, x \in \lang(G^h, A)$, then $w \equiv_{\lang(G^h)} x$.
To this end, suppose that $u, v \in \alphabet^*$ such that $uwv \in \lang(G^h)$.
In that case, $h(uwv) = h(u)h(w)h(x) \in \lang(G)$.
Since $h(w), h(x) \in \lang(G, A)$ by~\eqref{lemma:inverse-strictly-alphabetic-morphism-construction:languages}, $h(w) \equiv_{\lang(G)} h(x)$ by the premise that $G$ is CC\@.
Consequently, $h(uxv) = h(u)h(x)h(v) \in \lang(G)$, and thus $uxv \in \lang(G^h)$.
Symmetrically, $uxv \in \lang(G^h)$ implies that $uwv \in \lang(G^h)$; we can thus conclude that $w \equiv_{\lang(G^h)} x$.
\end{proof}

\begin{lemma}%
\label{lemma:regular-intersection-construction}
Let $R$ be a regular language.
We can construct a CFG $G_R$ such that
\begin{inparaenum}[(i)]
    \item\label{lemma:regular-intersection-construction:languages}
    for every nonterminal $A$ of $G_R$, there exist $A' \in V$ and $x \in \alphabet^*$ such that $\lang(G_R, A) = \lang(G, A) \cap \cc{x}{\equiv_R}$, and
    \item\label{lemma:regular-intersection-construction:language}
    $\lang(G_R) = \lang(G) \cap R$, and
    \item\label{lemma:regular-intersection-construction:preservation}
    if $G$ is CC, then so is $G_R$.
\end{inparaenum}
\end{lemma}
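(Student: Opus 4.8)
The plan is to build $G_R$ by the product construction used to intersect a context-free language with a regular one, but indexed by the \emph{syntactic monoid} of $R$ rather than by the states of a DFA, so that each new nonterminal is pinned to a single congruence class of $\equiv_R$. Since $R$ is regular, $\equiv_R$ has finite index; I write $M$ for the (finite) set of its congruence classes, equipped with the product $\cc{u}{\equiv_R}\cc{v}{\equiv_R} = \cc{uv}{\equiv_R}$ (well-defined because $\equiv_R$ is a congruence) and identity $\cc{\epsilon}{\equiv_R}$. Taking the empty context shows $w \equiv_R x$ forces $w \in R \iff x \in R$, so $R$ is a union of elements of $M$; setting $P_R = \{ m \in M : m \subseteq R \}$ we have $R = \bigcup_{m \in P_R} m$ and $w \in R$ iff $\cc{w}{\equiv_R} \in P_R$.

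I would take nonterminals $V_R = V \times M$, written $\angl{A, m}$, with the intended meaning $\lang(G_R, \angl{A, m}) = \lang(G, A) \cap m$; this makes claim~\eqref{lemma:regular-intersection-construction:languages} hold by construction, taking $A'$ to be the first component $A$ and $x$ any word with $\cc{x}{\equiv_R} = m$. The productions arise by decorating each $A \production s_0 \cdots s_{n-1} \in P$ (with $s_i \in \alphabet \cup V$) with classes $m_0, \dots, m_{n-1} \in M$ for the factors --- forced to equal $\cc{s_i}{\equiv_R}$ when $s_i \in \alphabet$ and chosen freely when $s_i \in V$ --- producing $\angl{A,\, m_0 \cdots m_{n-1}} \production t_0 \cdots t_{n-1}$, where $t_i = s_i$ for terminal $s_i$ and $t_i = \angl{s_i, m_i}$ for nonterminal $s_i$. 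The initial nonterminals are $I_R = \{ \angl{A, m} : A \in I,\ m \in P_R \}$. As $M$ is finite and each right-hand side has finitely many nonterminal occurrences, there are finitely many productions, so $G_R$ is a genuine CFG.

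The bulk of the work, and the step I expect to be most laborious, is the inductive proof of the identity $\lang(G_R, \angl{A, m}) = \lang(G, A) \cap m$, i.e.\ that $\angl{A, m} \deriv_{G_R}^* w$ holds exactly when $A \deriv_G^* w$ and $\cc{w}{\equiv_R} = m$. Both directions proceed by induction on derivation length, entirely parallel to Lemmas~\ref{lemma:weak-omega-reduction-construction} and~\ref{lemma:inverse-strictly-alphabetic-morphism-construction}: forgetting the second component of each nonterminal turns a $G_R$-derivation into a $G$-derivation, while the class annotations multiply to $\cc{w}{\equiv_R}$ by the congruence property; conversely, a $G$-derivation of $w$ is annotated top-down by the $\equiv_R$-classes of the subwords its nonterminals generate, which is consistent precisely because $w \mapsto \cc{w}{\equiv_R}$ is a morphism. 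Claim~\eqref{lemma:regular-intersection-construction:language} is then immediate, since $\lang(G_R) = \bigcup_{A \in I,\, m \in P_R} (\lang(G, A) \cap m) = \lang(G) \cap \bigl(\bigcup_{m \in P_R} m\bigr) = \lang(G) \cap R$.

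Finally, for claim~\eqref{lemma:regular-intersection-construction:preservation} I would use the elementary observation that $\equiv_{\lang(G)}$ and $\equiv_R$ together refine $\equiv_{\lang(G) \cap R}$: if $w \equiv_{\lang(G)} x$ and $w \equiv_R x$, then for every context $u, v$ we have $uwv \in \lang(G) \cap R$ iff $uxv \in \lang(G) \cap R$, so $w \equiv_{\lang(G) \cap R} x$. Now suppose $G$ is CC, and let $w, x \in \lang(G_R, \angl{A, m}) = \lang(G, A) \cap m$. Since $G$ is CC, $\lang(G, A)$ lies in one $\equiv_{\lang(G)}$-class, so $w \equiv_{\lang(G)} x$; and $w, x \in m$ gives $w \equiv_R x$. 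By the observation and $\lang(G_R) = \lang(G) \cap R$, we get $w \equiv_{\lang(G_R)} x$. Hence each nonterminal of $G_R$ has its language inside a single $\equiv_{\lang(G_R)}$-class, so $G_R$ is CC\@. This last step is conceptually the key one, but it collapses to the one-line refinement observation, so I expect the inductive verification of the semantics to be the real obstacle.
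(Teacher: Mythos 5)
Your proposal is correct and takes essentially the same route as the paper's own proof: the paper also builds $G_R$ as a product construction indexed by the congruence classes of $\equiv_R$ (using a chosen representative $x \in C$ per class where you use the class $m$ itself), proves $\lang(G_R, A^x) = \lang(G, A) \cap \cc{x}{\equiv_R}$ by induction on derivation length in both directions, and obtains CC-preservation from exactly your refinement observation, namely that $y \equiv_{\lang(G)} z$ together with $y \equiv_R z$ implies $y \equiv_{\lang(G_R)} z$. The differences (classes versus representatives, and your slightly larger nonterminal set $V \times M$) are purely notational and harmless.
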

\begin{proof}
For every congruence class $\cc{x}{\equiv_R}$ of $R$, pick a representative $x$; let $C$ be the set of these representatives.
Note that $C$ is finite, by the premise that $R$ is regular.
We construct the grammar $G_R = \angl{V_R, P_R, I_R}$, where $V_R$, $P_R$ and $I_R$ are the smallest sets satisfying
\begin{mathpar}
\inferrule{%
    x \in \lang(G, A) \cap C \\
    A \in V
}{%
    A^x \in V_R
}
\and
\inferrule{%
    x \in R \cap C \\
    A \in I
}{%
    A^x \in I_R
}
\and
\inferrule{%
    a_0x_0a_1x_1\cdots{}x_{n-1}a_n \equiv_R x \\
    A \production a_0A_0a_1A_1\cdots{}A_{n-1}a_n \in P
}{%
    A^x \production a_0A_0^{x_0}a_1A_1^{x_1}\cdots{}A_{n-1}^{x_{n-1}}a_n \in P_R
}
\end{mathpar}
Note that $G$ is a proper CFG\@; in particular, $P_R$ is finite, since $C$ and $P$ are finite.

We now argue claim~\eqref{lemma:regular-intersection-construction:languages}; more specifically, we claim that for $A^x \in V_R$ we have that $\lang(G_R, A^x) = \lang(G, A) \cap \cc{x}{\equiv_R}$.
From this, claim~\eqref{lemma:regular-intersection-construction:language} follows immediately.
\begin{itemize}
    \item[($\subseteq$)]
    Suppose that $A^x \deriv_{G_R}^n w$ for some $n \in \naturals$.
    We prove that $w \in \lang(G, A) \cap \cc{x}{\equiv_R}$ by induction on $n$.
    In the base, where $n = 1$, we have that $A^x \production w \in P_R$, thus $w \equiv_R x$ and $A \production w \in P$ by construction of $P_R$.
    Consequently, $w \in \cc{x}{\equiv_R}$ and $w \in \lang(G, A)$.

    For the inductive step, let $n > 1$, and assume the claim holds for $n' < n$.
    We then find that $A^x \deriv_{G_R} \alpha \deriv_{G_R}^{n-1} w$ for $A^x \production \alpha \in P_R$.
    By construction of $P_R$, we know that $\alpha = a_0A_0^{x_0}a_1A_1^{x_1}\cdots{}A_{k-1}^{x_{k-1}}a_k$ such that $a_0x_0a_1x_1\cdots{}x_{k-1}a_k \equiv_R x$ and $A \production a_0A_0a_1A_1\cdots{}A_{k-1}a_k \in P$.
    From this, we can derive that $w = a_0w_1a_1w_1\cdots{}w_{k-1}a_k$ such that for $0 \leq i < n$ it holds that $A_i^{x_i} \deriv_{G_R}^{n_i} w_i$ for some $n_i \leq n-1$.
    By induction, we know that for $0 \leq i < n$ it holds that $w_i \in \lang(G, A_i) \cap \cc{x_i}{\equiv_R}$.
    From this, it follows that $A \deriv_G^* w$ and $w = a_0w_0a_1w_1\cdots{}w_{k-1}a_k \equiv_R a_0x_0a_1x_1\cdots{}x_{k-1}a_k \equiv x$, meaning that $w \in \lang(G, A) \cap \cc{x}{\equiv_R}$.

    \item[($\supseteq$)]
    Suppose that $A \deriv_G^n w$ for some $n \in \naturals$ and $w \equiv_R x$ for $x \in C$; it suffices to show that $A^x \deriv_{G_R}^* w$.
    The proof proceeds by induction on $n$.
    In the base, where $n = 1$, we have that $A \production w \in P$, and so $A^x \in V_R$ and $A^x \production w \in P_R$.
    Consequently, $A^x \deriv_{G_R}^* w$.

    For the inductive step, let $n > 1$, and assume the claim holds for $n' < n$.
    We then find that $A \deriv_G \alpha \deriv_G^{n-1} w$ for $A \production \alpha \in P$.
    In that case, we can write $\alpha = a_0A_0a_1A_1\cdots{}A_{k-1}a_k$ and $w = a_0w_0a_1w_1\cdots{}w_{k-1}a_k$ such that for $0 \leq i < k$ we have that $A_i \deriv_G^{n_i} w_i$ with $n_i \leq n-1$.
    For $0 \leq i < n$, let us write $x_i$ for the unique element of $C$ such that $x_i \equiv_R w_i$.
    By induction, we find for $0 \leq i < k$ that $A_i^{x_i} \deriv_{G_R}^* w_i$.
    Furthermore, note that $a_0x_0a_1x_1\cdots{}x_{k-1}a_k \equiv_R a_0w_0a_1w_1\cdots{}w_{k-1}a_k = w \equiv_R x$, and so we find that $A^x \production a_0A_0^{x_0}a_1A_1^{x_1}\cdots{}A_{k-1}^{x_{k-1}}a_k \in P_R$.
    In total, we have $A^x \deriv_{G_R}^* w$.
\end{itemize}

As for~\eqref{lemma:regular-intersection-construction:preservation}, it suffices to show that if $A^x \in V_R$ and $y, z \in \lang(G_R, A^x)$, then $y \equiv_{\lang(G_R)} z$.
To this end, suppose that $u, v \in \alphabet^*$ such that $uyv \in \lang(G_R)$.
In that case, $y, z \in \lang(G, A) \cap \cc{x}{\equiv_R}$, and thus $y \equiv_{\lang(G)} z$ as well as $y \equiv_R z$.
Consequently, $uzv \in \lang(G, A)$ and $uzv \in R$, and thus $uzv \in \lang(G_R)$.
Symmetrically, $uzv \in \lang(G_R)$ implies that $uyv \in \lang(G_R)$; we can thus conclude that $y \equiv_{\lang(G_R)} z$.
\end{proof}

\end{document}